\def\eqref#1{equation~\ref{#1}}
\def\1{\bm{1}}
\def\va{{\bm{a}}}
\def\vu{{\bm{u}}}
\def\vv{{\bm{v}}}
\def\vw{{\bm{w}}}
\def\vx{{\bm{x}}}
\def\vz{{\bm{z}}}
\def\mA{{\bm{A}}}
\def\mT{{\bm{T}}}
\def\mU{{\bm{U}}}
\def\mV{{\bm{V}}}
\def\mW{{\bm{W}}}
\def\mX{{\bm{X}}}
\def\mZ{{\bm{Z}}}
\DeclareMathAlphabet{\mathsfit}{\encodingdefault}{\sfdefault}{m}{sl}
\SetMathAlphabet{\mathsfit}{bold}{\encodingdefault}{\sfdefault}{bx}{n}
\def\gG{{\mathcal{G}}}
\newcommand{\E}{\mathbb{E}}
\DeclareMathOperator{\Tr}{Tr}
\newtheorem{theorem}{Theorem}
\newtheorem{corollary}{Corollary}
\newtheorem{lemma}{Lemma}
\newtheorem{definition}{Definition}
\theoremstyle{plain}
\theoremstyle{definition}
\def\equationautorefname~#1\null{Eq.~(#1)\null}
\newcommand{\aref}[1]{\hyperref[#1]{Appendix~\ref{#1}}} % defined for sections in Appendix!
\newif\if@restonecol
\title{QDrop: randomly dropping quantization for \\ extremely low-bit post-training quantization}
\author{Xiuying Wei\textsuperscript{1, 2}\thanks{Equal contribution, \textsuperscript{\Letter} Corresponding author. }\ , Ruihao Gong\textsuperscript{1, 2}\footnote[1]{}\ , Yuhang Li\textsuperscript{2}, Xianglong Liu\textsuperscript{1\Letter}, Fengwei Yu\textsuperscript{2}\\
\textsuperscript{1}State Key Lab of Software Development Environment, Beihang University,
\textsuperscript{2}SenseTime Research\\
\footnotesize{\texttt{\{weixiuying,gongruihao,liyuhang1\}@sensetime.com,xlliu@buaa.edu.cn}
}}
\begin{document}

\maketitle

\begin{abstract}

Recently, post-training quantization~(PTQ) has driven much attention to produce efficient neural networks without long-time retraining. Despite its low cost, current PTQ works tend to fail under the extremely low-bit setting. In this study, we pioneeringly confirm that properly incorporating activation quantization into the PTQ reconstruction benefits the final accuracy. To deeply understand the inherent reason, a theoretical framework is established, indicating that the flatness of the optimized low-bit model on calibration and test data is crucial. Based on the conclusion, a simple yet effective approach dubbed as \textsc{QDrop} is proposed, which randomly drops the quantization of activations during PTQ. Extensive experiments on various tasks including computer vision (image classification, object detection) and natural language processing (text classification and question answering) prove its superiority. With \textsc{QDrop}, the limit of PTQ is pushed to the 2-bit activation for the first time and the accuracy boost can be up to 51.49\%. Without bells and whistles, \textsc{QDrop} establishes a new state of the art for PTQ. Our code is available at \url{https://github.com/wimh966/QDrop} and has been integrated into MQBench (\url{https://github.com/ModelTC/MQBench}).

% Recently, post-training quantization~(PTQ) has driven much attention to produce efficient neural networks without long-time retraining. However, existing works still fail to . Motivated by prior work exploring the Taylor Expansion of loss function, we observe that the overlook of activation quantization causes a suboptimal result. 
% Through theoretical analysis, we model our quantization task as a regularization problem towards a flat minima. To overcome the overfitting on calibration data, we propose \textsc{QDrop} to find both accurate and flat minima. \textsc{QDrop} is easy to implement, and can be easily combined with existing methods to achieve good generalizability on both calibration and test sets.
% % We also demonstrate that simply drops partial quantization for activation, can achieve better generalization on the limited calibration dataset.
% We conduct empirical validation on various tasks including image classification (ImageNet), object detection (MS COCO) and natural language processing (GLUE benchmark and SQuAD). With \textsc{QDrop}, the 2-bit PTQ of some quantization-sensitive architectures first becomes possible and the accuracy boost can be up to 51.49\%. Without bells and whistles, \textsc{QDrop} establishes a new state of the art for PTQ. %It improves the general flatness across different input distribution and thus can achieve a consistent 
\end{abstract}

%% Please note that we have introduced automatic line number generation
%% into the style file for \LaTeXe. This is to help reviewers
%% refer to specific lines of the paper when they make their comments. Please do
%% NOT refer to these line numbers in your paper as they will be removed from the
%% style file for the final version of accepted papers.

\section{Introduction}

In recent years, deep learning has been applied to all walks of life and offered substantial convenience for people's production and activities. While the performance of deep neural networks continues to increase, the memory and computation cost also scale up fastly and bring new challenges for edge devices. Model compression techniques such as network pruning~\citep{han2015deepcompress}, distillation~\citep{hinton2015distilling}, network quantization~\citep{jacob2018quantization} and neural architecture search~\citep{zoph2016nas} etc., are dedicated to reduce calculation and storage overhead. In this paper, we study quantization which adopts low-bit representation for weights and activations to enable fixed-point computation and less memory space.

Based on the cost of a quantization algorithm, researchers usually divide the quantization work into two categories: (1) Quantization-Aware Training (QAT) and (2) Post-Training Quantization (PTQ). QAT finetunes a pre-trained model by leveraging the whole dataset and GPU effort. On the contrary, PTQ demands much less computation to obtain a quantized model since it does not require end-to-end training. Therefore, much attention has recently been paid to PTQ~\citep{,cai2020zeroq, wang2020bitsplit, hubara2021accurate, banner2019aciq, nahshan2019lapq, Zhang_2021_DSG, Li_2022_MixMix} due to its low cost and easy-to-use characteristics in practice.

% In terms of accuracy, PTQ usually performs worse than QAT in low-bit networks (e.g. 2 or 4 bits). In this work, we aim to improve this challenging PTQ case.

% A common way to quantize DNN is quantization-aware-training, which trains and quantizes the whole network end-to-end. However, QAT usually demands for significant training data and time and thus is unsuitable for some usual situation with restricted data or frequently updated models. And post-training quantization~\cite{} has attracted much attention because of more practical application and less interest in data. 

Traditionally, PTQ pursues accuracy by performing the rounding-to-nearest operation, which focuses on minimizing the distance from the full-precision~(FP) model in parameter space. In recent progress, \cite{nagel2020adaround, li2021brecq} considered minimizing the distance in model space, i.e. the final loss objective. They use Taylor Expansion to analyze the change of loss value and derive a method to reconstruct the pre-trained model's feature by learning the rounding scheme. Such methods are efficient and effective in 4-bit quantization and can even push the limit of weight quantization to 2-bit. However, the extremely low-bit activation quantization, which faces more challenges, still fails to achieve satisfactory accuracy. We argue that one key reason is that existing theoretical analyses only model the weight quantization as perturbation while ignoring activation's. This will lead to the same optimized model no matter which bit the activations use, which is obviously counter-intuitive and thus causes a sub-optimal solution.

In this work, the effect of activation quantization in PTQ is deeply investigated for the first time. We empirically observe that perceiving the activation quantization benefits the extremely low-bit PTQ reconstruction and surprisingly find that only partial activation quantization is more preferable. An intuitive understanding is that incorporating activation will lead to a different optimized weight. Inspired by this, we conduct theoretical studies on how activation quantization affects the weight tuning, and the conclusion is that involving activation quantization into the reconstruction helps the flatness of model on calibration data and dropping partial quantization contributes to the flatness on test data. Motivated by both empirical and theoretical findings, we propose \textsc{QDrop} that randomly drops quantization during the PTQ reconstruction to pursue the flatness from a general perspective. With this simple and effective approach, we set up a new state of the art for PTQ on various tasks including image classification, object detection for computer vision, and text classification and question answering for natural language processing.

To this end, this paper makes the following contributions:
\begin{enumerate}[nosep, leftmargin=*]
    \item We confirm the benefits unprecedentedly from involving activation quantization in the PTQ reconstruction and surprisingly observe that partial involvement of activation quantization performs better than the whole.
    %\item A theoretical framework is established to deeply analyze the influence of incorporating activation quantization into weight tuning. Using this framework, we conclude that the flatness of the optimized low-bit model on calibration data and test data are crucial for the final accuracy and further derive our \textsc{QDrop} method.
    \item A theoretical framework is established to deeply analyze the influence of incorporating activation quantization into weight tuning. Using this framework, we conclude that the flatness of the optimized low-bit model on calibration data and test data is crucial for the final accuracy.
    \item Based on the empirical and theoretical analyses, we propose a simple yet effective method \textsc{QDrop} that achieves the flatness from a general perspective. \textsc{QDrop} is easy to implement and can consistently boost existing methods as a plug-and-play module for various neural networks including CNNs like ResNets and Transformers like BERT.
    \item Extensive experiments on a large variety of tasks and models prove that our method set up a new state of the art for PTQ. With \textsc{QDrop}, the 2-bit post-training quantization becomes possible for the first time.
\end{enumerate}
\section{Preliminaries}
\label{sec_preliminary}
\textbf{Basic Notations. }Throughout this paper, matrices (or tensors) are marked as $\mX$, whereas the vectors are denoted by $\vx$. Sometimes we use $\vw$ to represent the flattened version of the weight matrix $\mW$. Operator $\cdot$ is marked as scalar multiplication, $\odot $ is marked as element-wise multiplication for matrices or vectors. For matrix multiplication, we denote $\mW\vx$ as matrix-vector multiplication or $\mW\mX$ as matrix-matrix multiplication.

For a feedforward neural network with activation function, we denote it as $\gG(\vw, \vx)$ and the loss function as $L(\vw,\vx)$, where $\vx$ and $\vw$ are the network inputs and weights, respectively.
Note that we assume $\vx$ is sampled from training dataset $\mathcal{D}_t$, thus the final loss is denoted by $\E_{\vx\sim \mathcal{D}_t}[L(\vw,\vx)]$. For the network forward function, we can write it as:
\begin{equation}
\label{eq_fc}
\vz_i^{(\ell+1)}=\sum_{j}{\mW_{i,j}^{(\ell)}\cdot \va_j^{(\ell)}},\ \ \ f(\vz_i^{(\ell+1)})=\va_i^{(\ell+1)},
\end{equation}
where $\mW_{i,j}$ denotes weight connecting the $j^{th}$ activation neuron and the $i^{th}$ output. The bracket superscript $(\ell)$ is the layer index. $f(\cdot)$ indicates the activation function.
%$\lfloor\cdot\rfloor$, $\lceil\cdot\rceil$, $\lfloor\cdot\rceil$ are the floor, ceiling and rounding-to-nearest operators. $E[\cdot]$ denotes the expectation operator.

\textbf{Post-training Quantization. }Uniform quantizer maps continuous values $x \in \mathbb{R}$ into fixed-point integers. For instance, the activation quantization function can be written as
$\hat{x}=\lfloor \frac{x}{s} \rceil \cdot s$,
where $\lfloor\cdot\rceil$ denotes the rounding-to-nearest operator, $s$ is the step size between two subsequent quantization levels. While rounding-to-nearest operation minimizes the mean squared error between $\hat{x}$ and $x$, the minimization of parameter space certainly cannot equal to the minimization in final task loss~\citep{li2021brecq}, i.e., $\E_{\vx\sim \mathcal{D}_t}[L(\hat{\vw},\vx)]$. However, in the post-training setting, we only have a tiny subset $\mathcal{D}_c \in \mathcal{D}_t$ that only contains 1k images. Thus, it is hard to minimize the final loss objective with limited data.

Recently, a series of works~\citep{nagel2020adaround, li2021brecq} learn to either round up or down and view the new rounding mechanism as weight perturbation, i.e., $\hat{\vw} =\vw+\Delta\vw$. Take a pre-trained network $\gG$ as an example, they leverage Taylor Expansion to analyze the target, which reveals the quantization interactions among weights:
\begin{align}
   	 \min_{\hat{\vw}} \E\left[L(\hat{\vw},\vx)-L(\vw,\vx)\right]
	 & \approx \min_{\hat{\vw}}\E\left[\frac{1}{2}\Delta\vw^{\top}\mathbf{H}^{\vw} \Delta\vw\right],
	 \label{eq:adaround}
\end{align}
where $\mathbf{H}^{\vw} = \E_{\vx\sim \mathcal{D}_t}\nabla^2_{\vw} L(\vw, \vx)$ is the expected second-order derivative.  
The above objective could be transformed into the change of output weighted by the output Hessian. 
\begin{equation}
        \min_{\hat{\vw}} \E\left[\Delta\vw^\top \mathbf{H}^{\vw} \Delta\vw \right] \approx  
        \min_{\hat{\vw}} \E\left[\Delta\va^{\top}\mathbf{H}^{\va}\Delta\va\right]
\end{equation}
% Assuming $\nabla^2_{\va^{(n)}}L(\vx,\vw)=c\times\mathbf{I}$, 
About the above minimization, they \textbf{finetune only the weight} by reconstructing each block/layer output (See Case 1 in \autoref{fig:3cases}). 
But they did not explore the activation quantization during output reconstruction with only modeling weight quantization as noise. The step size for activation quantization is determined after the reconstruction stage. 

Intuitively, when quantizing the activations of a full-precision model to 2-bit or 3-bit, there should be different suitable weights. However, the existing works result in the same optimized weight due to the neglect of activation quantization. Therefore, we argue that when quantizing the neural network, the noise caused by activation quantization should be considered coherently with weights. 

\section{Methodology}
In this section, to reveal the influence of introducing activation quantization before output reconstruction, we first conduct empirical experiments and present two observations. Then a theoretical framework is built to investigate how the activation quantization affects the optimized weights. %Then we give theoretical results to analyze the reason.
%The key point of PTQ is to improve the general flatness on the limited calibration data. Based on the conclusion of the theoretical analysis, 
Last, equipped with the analysis conclusions, a simple yet effective method dubbed \textsc{QDrop} is proposed.
\label{headings}

%First level headings are in small caps,
%flush left and in point size 12. One line space before the first level
%heading and 1/2~line space after the first level heading.

% Remember to add citation (bibtex)

% Before this section, it would be better to have a preliminary section and briefly introduce the existing algorithms
% \begin{figure}[t]
%     \centering
%      \begin{subfigure}[b]{\textwidth}
%          \centering
%          \includegraphics[width=\textwidth]{figs/fig.png}
%          \caption{A typical structure of CNN (taken from)}
%          \label{fig1a}
%      \end{subfigure}
%      \hfill
% \vspace{-4mm}
% \end{figure}

\begin{minipage}{\textwidth}
  \begin{minipage}[t]{0.59\textwidth}
    \centering
    \adjustbox{valign=t}{\includegraphics[width=\linewidth]{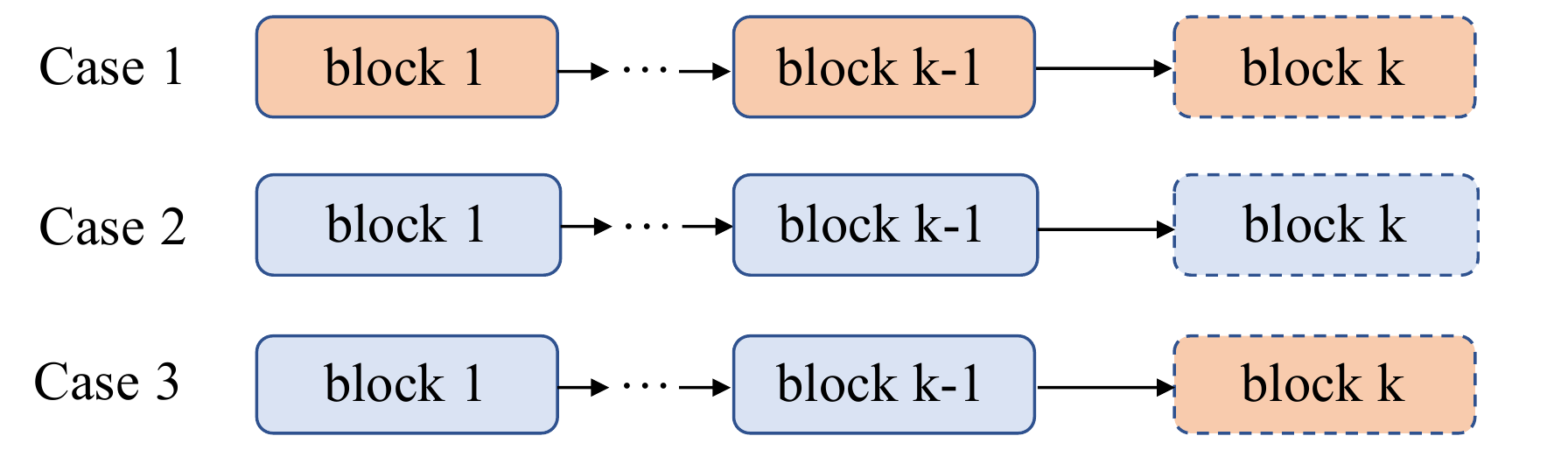}}
    \captionof{figure}{3 cases to involve activation quantization when optimizing the $k_{th}$ block's weight rounding. Activations are quantized inside the blue block and not quantized inside the orange block.}
    \label{fig:3cases}
  \end{minipage}
  \hfill
  \begin{minipage}[t]{0.39\textwidth}
    \centering
    %\vfill
    \begin{adjustbox}{valign=t,max width=\linewidth}
        % \begin{tabular}{lrrrrr}
        % \toprule
        % Case & ResNet-18 & ResNet-50 & MobileNetV2 & RegNet-600MF & MnasNet \\
        % \midrule
        % 1 & 31.26 & 9.52 & 4.88 & 2.86 & 14.63 \\
        % 2 & 50.86 & 51.90 & 37.85 & 27.69 & 48.12 \\
        % 3 & 52.83 & 53.42 & 38.51 & 31.92 & 49.40 \\
        % \bottomrule
        % \end{tabular}
        \begin{tabular}{lrrr}
        \toprule
        Case & 1 & 2 & 3 \\
        \midrule
        ResNet-18 & 18.88 & 45.74 & 48.07\\
        ResNet-50 & 4.34 & 46.98 & 49.07\\
        MobileNetV2 & 5.83 & 50.71 & 51.20\\
        RegNet-600MF & 42.77 & 60.94 & 62.07 \\
        MnasNet &  26.62 & 58.79 & 60.19\\
        % ResNet-18 & 31.26 & 50.86 & 52.83 \\
        % ResNet-50 & 9.52 & 51.90 & 53.42 \\
        % MobileNetV2 & 4.88 & 37.85 & 38.51 \\
        % RegNet-600MF & 2.86 & 27.69 & 31.92 \\
        % MnasNet & 14.63 & 48.12 & 49.40 \\
        \bottomrule
        \end{tabular}
    \end{adjustbox}
    \captionof{table}{2-bit or 3-bit post-training quantization accuracy on ImageNet dataset across different cases and different models.}
    \label{tab:3cases}
  \end{minipage}
\end{minipage}

\subsection{Empirical Observations}
\label{subsec:observation}

%Some existing works like AdaRound and BRECQ ignore the activation quantization when optimizing the weights' quantization while some other works including LAPQ~\citep{nahshan2019lapq} and DFQ~\citep{DFQ} will correct the weights/biases with quantized activations. However, for the optimization sequence of weight and activations' quantization, there is still no consensus on which part we should consider first yet.

To investigate the influence of activation quantization when reconstructing the layer/block output, we conduct preliminary experiments on the ImageNet~\citep{ILSVRC15} dataset. Our experiments are based on the open-sourced code \cite{li2021brecq} except that we will introduce activation quantization from 1 to $k-1$ blocks before the $k_{th}$ block's reconstruction. We give a simple visualization in \autoref{fig:3cases} to show 3 cases for putting activation quantization in different stages. Case 1 means that all activations are kept in 32-bit full-precision during the reconstruction of block output, which is also adopted in existing work~\cite{nagel2020adaround, li2021brecq}. Case 2 and Case 3 are used for incorporating activation quantization into the reconstruction stage. However, Case 3 will omit the current block's quantization while Case 2 will not.
%As for the activations inside the $k_{th}$ block, we study the quantized and un-quantized versions since leaving them as a whole FP module or taking them separately are both natural. As shown in \autoref{fig:3cases}, we define the common scheme used by BRECQ as case 1. Case 2 and 3 reconstructs the weights with quantized former activations. And Case 3 will extraly quantize the activations inside the block being optimized. 
The detailed results of these three cases are listed in \autoref{tab:3cases} (Comparisons on 2-bit (W2A2) quantization for ResNet-18, ResNet-50 and W3A3 for others for the sake of the crashed results on 2-bit.)
and the algorithm is put in \autoref{alg_observation}. 
According to the comparison, we can obtain two observations:

\begin{enumerate}[nosep, leftmargin=*]
    \item \emph{For extremely low-bit quantization (e.g., W2A2), there will be huge accuracy improvement when considering activation quantization during weight tuning.} This is confirmed by comparing with Case 1 and Case 2. We find Case 1 barely converges while Case 2 achieves good accuracy.
    % in As for the extremely low-bit quantization, there should be a different optimal solution from the ones with higher bits. However, AdaRound and BRECQ treat them equally and thus mislead the results to the suboptimal solution. 
    It reveals that a separate optimization of weights and activations cannot find an optimal solution.
    After introducing the activation quantization, the weights will learn to diminish the influence of activation quantization.
    
    \item {\emph{Partially introducing block-wise activation quantization surpasses introducing the whole activation quantization.}}
    Case 3 does not quantize the activations inside the current tuning block but achieves better results than Case 2. This inspires us that how much activation quantization we introduce for weight tuning will affect the final accuracy.
\end{enumerate}

%To further reveal the intrinsic causes of the observations, we analyze how activation quantization affects the weight tuning from a theoretical perspective.

%核心的问题，结论用斜体标注
%1.观察结论细化，过去对应哪些方法使用O1-O5
%一段，背景；二段，结论；，图中case O1(参考文献) 图左边，表（e.g.模型）放右边 O4(ours), O5(ours)

%Although AdaRound and BRECQ first work on weight then parameterize activation quantization, there are many other immediate optimization orders, such as first determining the step size of activation then weight or dealing with weight and activation together. Since the optimization procedure has been studied much in QAT, the problem for PTQ with limited training data still remains unclear that how good does the exist order can be and are there other ways to outperform the exist one a lot? Hoping to uncover potentially unified training strategy, we consider 5 optimization orders then provide thorough and fair comparisons across various architectures and quantization bits based on BRECQ. Figure 1 explains our designs named Case O1-O5 concretely. According to the results from table 1, our interesting finding is that the optimal order differs from distinct model and bit settings. For example, Case O1 behaves poorly when activation bits goes down but fairly well at high bits and the newly proposed Case O5 beats others overall.

\subsection{How does activation quantization affect weight tuning}
%activation很难分析，放到weight上分析
\label{subsection:analsis}
The empirical observations have highlighted the importance of activation quantization during the PTQ pipeline. To further explore how activation quantization will affect the weight tuning, we build a theoretical framework that analyzes the final loss objective with both weights and activations being quantized, which presents clues of high accuracy for extremely low-bit post-training quantization. 

Conventionally, the activation quantization could be modeled as injecting some form of noise imposed on the full-precision counterpart, defined as $e=(\hat{a}-a)$. 
To remove the influence of activation range on $e$, we translate the noise into a multiplicative form, i.e., $\hat{a}= a\cdot(1+u)$, where the range of $u$ is affected by bit-width and rounding error. Detailed illustration of the new form noise can be found in \autoref{appendix_noise}.
% Here we translate the noise into a scaling form, i.e., $\hat{a}= a\cdot(1+u)$, for the ease of understanding the later theoretical analyses. %  To better formulate this noise, we adopt a scaling expression that $\hat{a}= a(1+u)$. The proof is available in Appendix.

Here, $\mathbf{1}+\vu({\vx})$ is adopted to present the activation noise since it is related to specific input data point $\vx$. Equipped with the noise, we add another argument in calculating the loss function and define our optimization objective in PTQ as:
%Equipped with the activation noise, we define our optimization objective in PTQ as:
\begin{equation}
\label{eq_ptq_loss}
    \min_{\hat{\vw}} \E_{\vx\sim\mathcal{D}_c}[L(\vw+\Delta\vw, \vx, \bm{1}+\vu({\vx})) - L(\vw, \vx, \bm{1})]. 
\end{equation}
% And we adopt $\mathbf{1}+\vu({\vx})$ to denote such noise since it is related to specific input data point $\vx$.
% Note that we add another argument in calculating the loss function, which is the noise on activation. Also, we use $\mathbf{1}+\vu({\vx})$ to denote such noise since it is related to specific input data point $\vx$. 

%Now, traditionally we should use Taylor series expansion to find the approximate objective of \autoref{eq_ptq_loss}. However, this would incur the computation of Hessian matrix of $\vw$ and $\va$, which is complex to solve. 
We hereby use a transformation that can absorb the noise on activation and transfer to weight, where the perturbation on weight is denoted as $\bm{1}+\vv(\vx)$ ($\mV(\vx)$ is used in matrix multiplication format). Consider a simple matrix-vector multiplication $\mW\va$ in forward pass, we have $\mW(\va\odot(\bm{1}+\vu(\vx))) = (\mW\odot(\bm{1}+\mV(\vx)))\va$, given by

\begin{equation}
\label{eq_example}
\mW(\va \odot \begin{bmatrix}
1+\vu_1(\vx)\\
1+\vu_2(\vx)\\
...\\
1+\vu_n(\vx) \end{bmatrix})=(\mW \odot \begin{bmatrix}
1+\vu_1(\vx) & 1+\vu_2(\vx) & ... & 1+\vu_n(\vx) \\
1+\vu_1(\vx) & 1+\vu_2(\vx) & ... & 1+\vu_n(\vx)\\
...\\
1+\vu_1(\vx) & 1+\vu_2(\vx) & ... & 1+\vu_n(\vx)\end{bmatrix})\va.
\end{equation}

By taking $\mV_{i,j}(\vx)=\vu_j(\vx)$, quantization noise on the activation vector ($\bm{1}+\vu(\vx)$) can be transplanted into perturbation on weight ($\bm{1}+\vv(\vx)$). Note that for a specific input data point $\vx$, there are two distinct $\vu({\vx})$ and $\vv({\vx})$. Proof is available at \autoref{appendix_fc_proof}.

% where the $\bm{1}+\vu(\vx)$ is the scaled factor of activation quantization.
% This simple equation can transplant the noise on the activation vector into weight matrix, where for a specific input data point $\vx$, there are two distinct $\vu({\vx})$ and $\vv({\vx})$.
Also note that for a convolutional layer, we cannot apply such transformation since the input to convolution is a matrix and will cause different $\mV$.
Nonetheless, we can give a formal lemma that absorbs $\vu(\vx)$ and holds corresponding $\vv(\vx)$ (See the Appendix \autoref{appendix_conv_proof} for rigorous proof):
\begin{lemma}
\label{lemma_transform}
%¥With constructing $\vv$ corresponding to $\vu$, minimizing the last two term of \autoref{eq_min_g1} is comparable with optimizing the equation:
%For a quantized (convolutional) neural network, the final loss objective in post-training quantization can be written by
For a quantized (convolutional) neural network, the influence of activation quantization on the final loss objective in post-training quantization can be transformed into weight perturbation.
\begin{equation}
\label{eq_transform}
\E_{\vx\sim\mathcal{D}_c}[L(\hat{\vw}, \vx, \bm{1}+\vu({\vx})) - L(\vw, \vx, \bm{1})] \approx \E_{\vx\sim\mathcal{D}_c}[L(\hat{\vw}\odot(\bm{1}+\vv({\vx})),\vx,\bm{1}) - L(\vw, \vx,\bm{1})]
\end{equation}
\end{lemma}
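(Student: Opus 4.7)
The natural first step is to split the loss difference by introducing the pivot $L(\hat{\vw},\vx,\bm{1})$:
\begin{equation*}
L(\hat{\vw},\vx,\bm{1}+\vu(\vx))-L(\vw,\vx,\bm{1}) = \bigl[L(\hat{\vw},\vx,\bm{1}+\vu(\vx))-L(\hat{\vw},\vx,\bm{1})\bigr] + \bigl[L(\hat{\vw},\vx,\bm{1})-L(\vw,\vx,\bm{1})\bigr].
\end{equation*}
The second bracket is already a pure weight-perturbation term and passes through the desired identity unchanged. Everything then reduces to rewriting the first bracket, the activation-noise piece, as the weight-perturbation piece $L(\hat{\vw}\odot(\bm{1}+\vv(\vx)),\vx,\bm{1})-L(\hat{\vw},\vx,\bm{1})$ for a suitable $\vv(\vx)$, after which taking expectation over $\vx\sim\mathcal{D}_c$ returns equation~(\ref{eq_transform}).

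\textbf{Layerwise reduction.} Since the activation noise enters the loss only through the pre-activations of each quantized layer, I would prove the rewriting one layer at a time and then chain the resulting per-layer weight perturbations through the network, letting any higher-order cross-layer terms be absorbed into the $\approx$ slack. For a fully connected layer, the elementary identity displayed in equation~(\ref{eq_example}) with $\mV_{i,j}(\vx)=\vu_j(\vx)$ is an exact algebraic absorption of the activation noise into an equivalent weight perturbation, so the activation-noise piece is reproduced with equality. For a convolutional layer, I would first unfold the operation via im2col. Restricted to a fixed output spatial position $(h,w)$, the computation $\sum_{c,i,j}\mW_{c,i,j}\cdot\va_{c,h+i,w+j}$ has precisely the bilinear form of the FC case, and equation~(\ref{eq_example}) supplies an exact position-indexed perturbation $\mV^{(h,w)}(\vx)$ that absorbs the activation noise entering that position.

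\textbf{The main obstacle and its resolution.} The real difficulty is that each kernel entry $\mW_{c,i,j}$ is shared across all output positions, so the exact per-position perturbations $\mV^{(h,w)}(\vx)$ generally disagree and no single tensor $\vv(\vx)$ of the shape of $\vw$ can reproduce all of them simultaneously; this is precisely why the lemma is stated with $\approx$. To pin down a canonical $\vv(\vx)$, I would Taylor-expand both sides to first order in their respective noises. The activation-noise bracket expands to $\langle \nabla_{\vu}L(\hat{\vw},\vx,\bm{1}),\vu(\vx)\rangle$, while the target weight-perturbation expression expands to $\langle \hat{\vw}\odot\nabla_{\vw}L(\hat{\vw},\vx,\bm{1}),\vv(\vx)\rangle$. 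Applying the convolutional chain rule, I would identify $\vv(\vx)$ as the upstream-gradient-weighted spatial average of the $\mV^{(h,w)}(\vx)$; this choice makes the two first-order expansions coincide. The residual is then quadratic in $\vu(\vx)$ plus a term driven by the spatial variance of $\mV^{(h,w)}(\vx)$ weighted against the upstream gradient, both of which are small in the low-bit PTQ regime and justify the $\approx$. Making the size of this residual quantitatively negligible against the reconstruction objective is the most delicate step; the FC case, the pivot split, and the final expectation over $\mathcal{D}_c$ are all mechanical by comparison.
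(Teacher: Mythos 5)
Your proposal matches the paper's own proof in all essentials: the fully connected case is handled by the same exact algebraic absorption, and for convolutions the paper likewise matches first-order Taylor expansions in $\vu$ and $\vv$ by choosing $\mV_{p,q}^{(\ell)}=\sum_{i,j}\mU_{i,j}^{(\ell)}\cdot\mT_{(i,j),(p,q)}^{(\ell)}\big/\sum_{i,j}\mT_{(i,j),(p,q)}^{(\ell)}$ --- precisely your upstream-gradient-weighted spatial average over output positions --- with the higher-order residual absorbed into the $\approx$. The only cosmetic difference is that the paper proves the identity directly, reserving the pivot split around $L(\hat{\vw},\vx,\bm{1})$ for the subsequent derivation of Theorem 1.
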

By interpolating $L(\hat{\vw}, \vx, \bm{1})$ into \autoref{lemma_transform}, we can obtain the final theorem:

%\autoref{eq_min_g2} can make an implicit sharpness term. To explicitly formulate this term, we can rewrite the right hand side of the eqation above as
\begin{theorem}
\label{theorem1}
% Incorporating activation quantization into weight reconstruction will bring an extra optimization objective related with weight perturbation besides the original one in AdaRound or BRECQ.
For a neural network $\mathcal{G}$ with quantized weight $\hat{\vw}$ and activation perturbation $\bm{1} +\vu({\vx})$, we have:
\begin{equation}
\begin{aligned}
\E_{\vx\sim\mathcal{D}_c}&[L(\hat{\vw}, \vx, \bm{1}  +\vu({\vx})) - L(\vw, \vx, \bm{1})] \approx \\ &\E_{\vx\sim\mathcal{D}_c}[\underbrace{\left(L(\hat{\vw}, \vx,\bm{1}) - L(\vw, \vx,\bm{1})\right)}_{(7-1)}+\underbrace{\left(L(\hat{\vw}\odot(\bm{1}+\vv(\vx)),\vx,\bm{1}) - L(\hat{\vw}, \vx, \bm{1})\right)}_{(7-2)}].
\label{eq_sep}
\end{aligned}
\end{equation}
\end{theorem}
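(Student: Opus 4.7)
The plan is to obtain Theorem 1 as an essentially immediate corollary of Lemma 1, combined with a telescoping insertion of the intermediate loss value $L(\hat{\vw}, \vx, \bm{1})$. Lemma 1 has already done the heavy lifting by absorbing the activation perturbation $\bm{1}+\vu(\vx)$ into a data-dependent multiplicative weight perturbation $\bm{1}+\vv(\vx)$, so the remaining task is purely algebraic: rewrite the single difference on the right-hand side of Lemma 1 as the sum of two differences that separately attribute the loss change to (i) the weight-only quantization $\hat{\vw}\!-\!\vw$, and (ii) the transplanted activation perturbation $\vv(\vx)$.

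First, I would invoke Lemma 1 inside the expectation to replace $L(\hat{\vw}, \vx, \bm{1}+\vu(\vx))$ with $L(\hat{\vw}\odot(\bm{1}+\vv(\vx)), \vx, \bm{1})$ up to the stated $\approx$. Second, I would apply the add-and-subtract trick by inserting $\pm L(\hat{\vw}, \vx, \bm{1})$ into the integrand, obtaining the exact identity
\begin{equation*}
L(\hat{\vw}\odot(\bm{1}+\vv(\vx)), \vx, \bm{1}) - L(\vw, \vx, \bm{1}) = \bigl[L(\hat{\vw}, \vx, \bm{1}) - L(\vw, \vx, \bm{1})\bigr] + \bigl[L(\hat{\vw}\odot(\bm{1}+\vv(\vx)), \vx, \bm{1}) - L(\hat{\vw}, \vx, \bm{1})\bigr].
\end{equation*}
Third, I would take expectation over $\vx \sim \mathcal{D}_c$ on both sides; by linearity of expectation (applied pointwise in $\vx$, so the dependence of $\vv(\vx)$ on $\vx$ is harmless) the two bracketed expressions become exactly summands (7-1) and (7-2) of the theorem.

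The only subtlety I foresee is bookkeeping the $\approx$. The approximation sign is inherited entirely from Lemma 1, which comes from the first-order absorption of $\vu(\vx)$ into $\vv(\vx)$; the subsequent decomposition is an exact algebraic identity and introduces no new error. Thus the main obstacle is not analytic but expository: one must make clear that neither (7-1) nor (7-2) is itself being Taylor-expanded at this stage, and that their separation is meaningful only because Lemma 1 guarantees the same $\vv(\vx)$ can be used to represent the activation effect regardless of which weight the network is evaluated at. I would close by noting that Theorem 1 now licenses the subsequent flatness discussion: (7-1) captures the weight-only quantization cost on clean activations, while (7-2) captures the sensitivity of the quantized-weight loss to the transplanted activation perturbation, and it is precisely this second term whose flatness \textsc{QDrop} aims to promote.
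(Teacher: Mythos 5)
Your proposal is correct and matches the paper's own derivation exactly: the paper also obtains Theorem~1 by applying Lemma~1 and then ``interpolating'' (adding and subtracting) $L(\hat{\vw}, \vx, \bm{1})$ inside the expectation, with the $\approx$ inherited solely from the lemma. Your additional remarks on bookkeeping the approximation and on the role of the two terms are consistent with the paper's discussion and introduce no new gaps.
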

%Here, \autoref{theorem1} divides optimization objective into two terms, where the term (7-1) has been explored in ~\citep{nagel2020adaround,li2021brecq} and stands for interactions between weight quantization and loss function and term (7-2) means the loss change by injecting activation quantization along with the quantized weight.
Here, \autoref{theorem1} divides optimization objective into two terms. Term (7-1) is the same as \autoref{eq:adaround} explored in ~\citep{nagel2020adaround,li2021brecq}, which reveals how weight quantization interacts with loss function. Term (7-2) is the additional loss change by introducing activation quantization.
In another way to interpret \autoref{eq_sep}, the term (7-2) stands for the loss change with jitters on the weight quantized network $\mathcal{G}(\hat{\vw}, \vx)$. This type of noise correlates with certain kinds of robustness. 

As stated in some works about generalization and flatness~\citep{dinh2017sharp,hochreiter1997flat}, intuitively, flat minimum means relatively small loss change under perturbation in the parameters, otherwise, the minimum is sharp. In this paper, we follow the notion of flatness defined in ~\citep{neyshabur2017exploring}, which considers loss change from the perspective of statistical expectation. And as ~\citep{neyshabur2017exploring} and ~\citep{jiang2019fantastic} refer to, we consider the magnitude of the perturbation with respect to the magnitude of parameters and take the formulation as $\mathbb{E}_{\bm{v}\sim \mathcal{D}}[L(f_{\bm{w}\odot (\bm{1}+\vv)})-L(f_{{\vw}})]$,
where each element of $\bm{v}$ is a random variable sampled from a noise distribution $\mathcal{D}$ and $L$ represents for optimization objective on the training set.
From this perspective, the term (7-2) can be interpreted as the flatness with perturbation related to input data, and thereby we could achieve the following corollary.
\begin{corollary}
On calibration data $\vx$, with activation quantization noise $\vu(\vx)$, there exists the corresponding weight perturbation $\vv(\vx)$ which satisfies that the trained quantized model is flatter under the perturbation $\vv(\vx)$.

% With activation quantization, the trained quantized weights are more robust under $\vv(\vx)$ perturbations and also flat on $\vv(\vx)$ directions for calibration data $\vx$. 
\label{corollary:flatness}
\end{corollary}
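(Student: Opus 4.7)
The plan is to read Corollary 1 as a direct consequence of Theorem 1 combined with the flatness formulation borrowed from Neyshabur et al. First I would make explicit the correspondence: by Lemma 1, for each calibration point $\vx$ the activation noise $\vu(\vx)$ can be absorbed into a weight perturbation $\vv(\vx)$ whose entries depend on $\vx$ through the transplantation in \autoref{eq_example} (for fully-connected layers) and its convolutional analogue. So the existence half of the statement is essentially a restatement of Lemma 1 and needs only to be cited.

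The substantive half is the claim that the optimized low-bit model is \emph{flatter} under this particular perturbation. Here I would use Theorem 1 to rewrite the PTQ objective of \autoref{eq_ptq_loss} as the sum of (7-1) and (7-2), then observe that (7-2), namely
\begin{equation*}
\E_{\vx\sim\mathcal{D}_c}\bigl[L(\hat{\vw}\odot(\bm{1}+\vv(\vx)),\vx,\bm{1})-L(\hat{\vw},\vx,\bm{1})\bigr],
\end{equation*}
is exactly an instance of the flatness quantity $\E_{\vv\sim\mathcal{D}}[L(f_{\vw\odot(\bm{1}+\vv)})-L(f_{\vw})]$ from the definition in the paragraph above, with the noise distribution $\mathcal{D}$ determined by the distribution over $\vx\in\mathcal{D}_c$ and the deterministic map $\vx\mapsto\vv(\vx)$. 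Therefore minimising (7-2) is by definition minimising the sharpness at $\hat{\vw}$ under this noise.

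Next I would compare two optimizers. Let $\hat{\vw}^{\mathrm{w}}$ be the minimizer of only (7-1) (the AdaRound/BRECQ-style objective that ignores activation quantization, i.e.\ Case 1) and $\hat{\vw}^{\mathrm{wa}}$ be the minimizer of the full objective (7-1)+(7-2). Since $\hat{\vw}^{\mathrm{wa}}$ minimises the sum while $\hat{\vw}^{\mathrm{w}}$ minimises only the first summand, one gets
\begin{equation*}
\mathrm{(7\text{-}2)}[\hat{\vw}^{\mathrm{wa}}]\;\le\;\mathrm{(7\text{-}1)+(7\text{-}2)}[\hat{\vw}^{\mathrm{wa}}]-\mathrm{(7\text{-}1)}[\hat{\vw}^{\mathrm{w}}]\;\le\;\mathrm{(7\text{-}2)}[\hat{\vw}^{\mathrm{w}}]+\bigl(\mathrm{(7\text{-}1)}[\hat{\vw}^{\mathrm{wa}}]-\mathrm{(7\text{-}1)}[\hat{\vw}^{\mathrm{w}}]\bigr),
\end{equation*}
where the last bracket is non-negative but controlled by the gap in (7-1). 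Read in the other direction, since $\hat{\vw}^{\mathrm{wa}}$ is optimal for the sum, any small increase it incurs in (7-1) is compensated by a strictly larger decrease in (7-2); this is the precise sense in which the trained quantized model is ``flatter'' under the induced $\vv(\vx)$.

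The hard part will be making the comparison rigorous without invoking convexity, since the reconstruction loss is generically non-convex in $\hat{\vw}$. I expect the cleanest way to finesse this is to keep the corollary at the level of the minimisation principle (the objective explicitly contains the flatness term (7-2), so its minimiser cannot be worse on (7-2) than a minimiser that ignores it, up to the trade-off with (7-1)) rather than claiming a pointwise inequality on the loss surface, and to rely on Lemma 1 and the Taylor approximation already invoked in Theorem 1 for the $\approx$ relations. This keeps the corollary self-contained and motivates the subsequent design choice of \textsc{QDrop}, which randomises over subsets of activation quantizers to broaden the effective distribution of $\vv(\vx)$ and hence target flatness on test data as well.
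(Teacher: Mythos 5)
Your proposal matches the paper's route: the paper also obtains Corollary 1 by (i) invoking Lemma 1 (and its convolutional version, Theorem 2 in the appendix, which constructs $\mV_{p,q}^{(\ell)}$ explicitly from $\mU^{(\ell)}$ and the first-order terms $\mT^{(\ell)}$) for the existence of $\vv(\vx)$, and (ii) identifying term (7-2) of Theorem 1 with the Neyshabur-style expected-loss-change flatness functional, so that minimizing the full PTQ objective automatically penalizes sharpness under the induced perturbation. Where you go beyond the paper is the explicit exchange argument comparing the minimizer of (7-1) alone with the minimizer of (7-1)+(7-2); the paper never writes this down and instead supports the "flatter" claim empirically (the sharpness measurements in Figure 2 and the Hessian eigenvalue/trace comparison in the appendix). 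Your exchange argument is sound for exact global minimizers and, contrary to your worry, needs no convexity: from $\mathrm{(7\text{-}1)}[\hat{\vw}^{\mathrm{w}}]\le\mathrm{(7\text{-}1)}[\hat{\vw}^{\mathrm{wa}}]$ and $(\mathrm{(7\text{-}1)}+\mathrm{(7\text{-}2)})[\hat{\vw}^{\mathrm{wa}}]\le(\mathrm{(7\text{-}1)}+\mathrm{(7\text{-}2)})[\hat{\vw}^{\mathrm{w}}]$ one gets directly $\mathrm{(7\text{-}2)}[\hat{\vw}^{\mathrm{wa}}]\le\mathrm{(7\text{-}2)}[\hat{\vw}^{\mathrm{w}}]$, which is cleaner than your chain; the one inaccuracy is the word "strictly" (the decrease in (7-2) is only guaranteed to be at least as large as the increase in (7-1)). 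The genuine looseness in both your argument and the paper's is that the block-wise reconstruction only finds approximate local minimizers and that the identification of (7-2) with flatness rests on the first-order Taylor approximations of Lemma 1, which is why the paper keeps the corollary at the informal level and leans on the empirical sharpness evidence.
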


With \autoref{corollary:flatness}, Case 2 and 3 discussed in \autoref{subsec:observation} enjoy a flatter loss landscape benefited from perceiving the activation quantization. This explains their superiority compared with Case 1. The measurement of sharpness on calibration data (left part) in \autoref{fig_sharpness_measurement} further validates this point. With similar perturbation magnitude, Case 2 and 3 suffer less loss degradation than Case 1. %Both the theoretical and experimental analyses indicate that involving activation quantization before weight tuning contributes to a more flat minimum of PTQ reconstruction, which is the secret for a better accuracy. 

%Up to now, the meaning of activation quantiation on calibration data is deeply explored. As stated before, the activation noise and relevant flat directions varies according the data distribution. Therefore, the situation on test data still requires discussion.

%In turn, if the quantized weight is flatter under $\vv_x$ for test samples, we can reach lower accuracy degradation, where Case 2, 3 concerning about $\vu_x$ during training evidently leads to a smoother loss landscape compared to Case 1, as shown in \autoref{fig_sharpness_measurement}. Yet, does flatness on calibration or train data really serve as flatness on test one? 
\begin{figure}[t]
    \centering
    %\caption{Measure sharpness on different data distribution among three cases. We adopt the measurement defined in ~\citep{keskar2016large} and enumerate sharpness value to check perturbation magnitude, which represents for the upper bound reaching the sharpness, for more stable results. } 
    \includegraphics[width=0.9\textwidth]{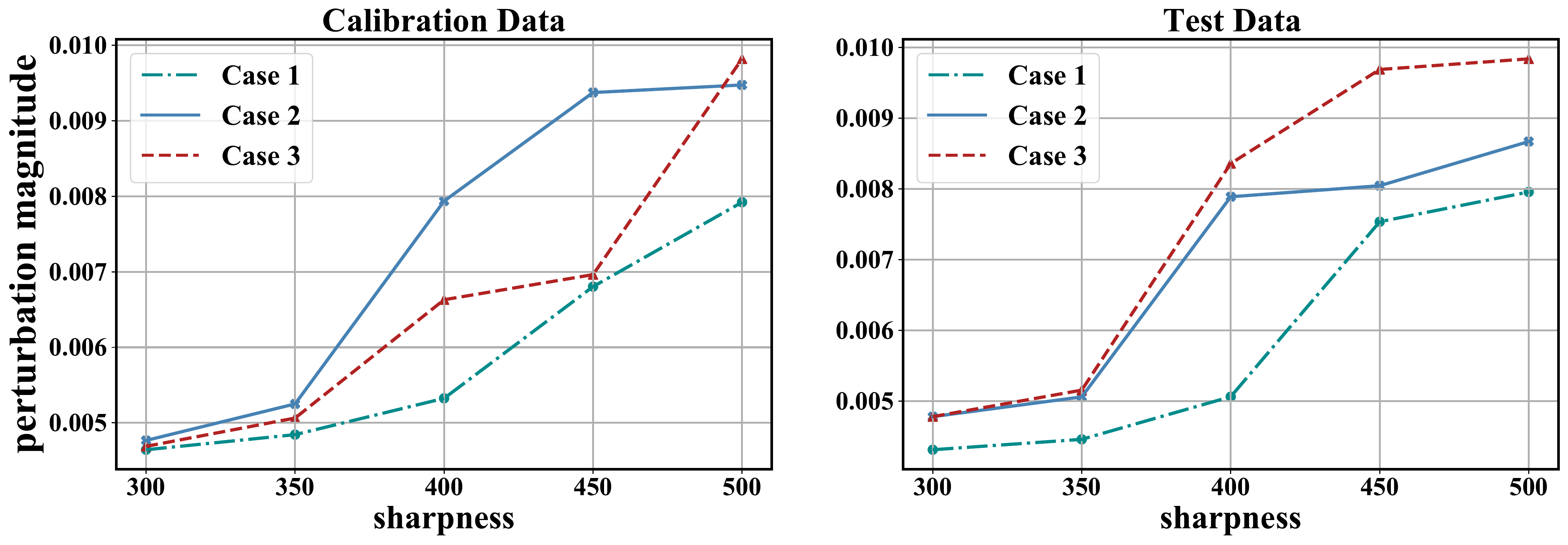}
     \caption{Measure sharpness on different data distributions among three cases. We adopt the measurement defined in ~\citep{keskar2016large}. With the same degree of loss change ratio, those who can tolerate a larger perturbation magnitude enjoy a flatter loss landscape.} 
    \label{fig_sharpness_measurement}
\end{figure}
% \begin{minipage}[htbp]{\textwidth}
% \begin{minipage}{0.28\textwidth}
    % \centering
    % \begin{adjustbox}{max width=0.9\textwidth}
    % \begin{tabular}{lccccc} 
    %     \textbf{Method} & \textbf{Test Accuracy} & \textbf{Train Accuracy} \\ \toprule
    %     Case 1 & 31.26 & 34.67\\ \midrule
    %     Case 2 & 50.86 & 70.12\\  \midrule
    %     Case 3 & 52.83 & 70.61 \\ \midrule
    %     \textsc{QDrop} & 54.72 & 66.50 \\
    %     \bottomrule
    % \end{tabular}  
    % \end{adjustbox}
    % \captionof{table}{Impact of different design choices for \autoref{subsec_qdrop}, with W2A2 on ImageNet and ResNet-18}
    % \label{tab_perturbation_design}
% \end{minipage}
% \hfill
% \begin{minipage}{0.8\textwidth}
%     \centering
%     \includegraphics[width=0.9\textwidth]{iclr2022/}
%     \captionof{figure}{Measure sharpness on different data distribution among three cases on ResNet18 W3A3. We adopt the measurement defined in ~\citep{keskar2016large} and enumerate sharpness value to check perturbation magnitude, which represents for the upper bound reaching the sharpness, for more stable results. } 
%     \label{fig_sharpness_measurement}
% \end{minipage}
% \end{minipage}

% \begin{wrapfigure}{r}{0.6\textwidth}
%   \begin{center}
%     \includegraphics[width=1.\textwidth]{iclr2022/figs/loss_landscape/cases_sharpness_measurement.pdf}
%   \end{center}
%   \caption{Measure sharpness on different data distribution among three cases}
% \end{wrapfigure}
\vspace{-0.5em}
\subsection{QDrop}
\label{subsec_qdrop}
%Note that although \autoref{eq_min_g2} is optimized with $x\sim\mathcal{D}_c$, its final target is to be as small as possible on $x\sim\mathcal{D}_t$.
% 第一段还需要再想想
As aforementioned, introducing activation quantization is theoretically proved to produce a flatter model than existing works and the directions of flatness depend on the data distribution. Since the PTQ is especially sensitive to calibration data~\citep{Yu_2021_CVPR_CrossDomain}, we need to transfer the investigations in \autoref{subsection:analsis} on calibration data into the test setting for a thorough understanding. In specific, we consider \autoref{eq_sep} on test set and inspect two terms separately in the following.
Based on the analyses, our method \textsc{QDrop} will be derived to pursue an excellent performance on test data.

% Although \autoref{eq_transform} is optimized with data sampled from the tiny calibration set $\mathcal{D}_c$, our final objective is to have a as small as possible loss on test set $\mathcal{D}_t$. Due to the small size of $\mathcal{D}_c$, PTQ suffer high risk of being overfitting.
%\subsubsection{Reducing Overfitting}
\textbf{Term (7-1) on test set. }As suggested in \autoref{subsection:analsis}, with both quantized activations and weights, we additionally optimize the term (7-2) representing the flatness on calibration data. 
This term will encourage the quantized model to learn a flat minimum. As a result, the traditional objective of AdaRound~(term (7-1)) can naturally generalize better for test data (i.e., $\E_{\vx\sim\mathcal{D}_t}\left(L(\hat{\vw}, \vx, \bm{1}) - L(\vw, \vx, \bm{1})\right)$). 
% 这一句imply accuracy很奇怪啊，为什么是前文的工作暗示case23高？不应该是这个现象暗示的吗

%\subsubsection{Flatness on Test Set}
\textbf{Term (7-2) on test set. } 
Furthermore, we should also concern about the term (7-2) on test data, i.e. $\E_{\vx\sim\mathcal{D}_t}[L(\hat{\vw}\odot(\bm{1}+\vv(\vx)),\vx,\bm{1}) - L(\hat{\vw}, \vx,\bm{1})]$.
% Under the restriction that there is no access to test data, a naive approach is to diversify the calibration data with data augmentation. Table xxx proves the effectiveness of augmented calibration data.
%Besides, according to \autoref{corollary:flatness}, introducing a specific form of noise will cause an optimized flatness on a specific direction. Inspired by this, if we could acquire a minimum that is ﬂat under many directions (i.e., different forms of $\mV(\vx)$) on calibration data, the possibility of ﬂatness on test data will be increased.
%Besides, we can hypothesize that the flatness on test setcan be improved by further adding regularization during the calibration. 
%To test this hypothesis, we visualize the sharpness measurement of the three cases discussed in \autoref{subsec:observation} on test data. As shown in the right part of \autoref{fig_sharpness_measurement}.
As revealed in \autoref{subsection:analsis}, the term (7-2) implies the flatness where its situation on calibration data has been exploited. Here, we further investigate the flatness on test samples. Note that $\vv(\vx)$ is converted from $\vu(\vx)$ and this activation quantization noise varies with input data. \autoref{fig_sharpness_measurement} shows that there is a gap between the test data and calibration data for the flatness of the 3 cases. According to \autoref{corollary:flatness}, these 3 cases actually introduce different $\vu$ mathematically and thus will result in different flatness directions, given by
\begin{equation}
 \text{Case 1: } \vu = \textbf{0}; \; \text{Case2: } \vu = \frac{\hat{\va}}{\va} - 1; \; \text{Case 3: } \vu = \begin{cases}
 \frac{\hat{\va}}{\va} - 1, & \text{block}_1 \sim \text{block}_{k-1} \\
 \textbf{0}, & \text{block}_k
 \end{cases}.
\end{equation}
For Case 1, there is no activation quantization during calibration without taking flatness into account. Case 2 suggests the activation perturbation totally and therefore enjoys a good flatness on calibration data. However, due to the mismatch on calibration data and test one, it is highly possible that Case 2 causes overfitting (See \autoref{tab_overfitting} for more details). Case 3, in fact achieves the best performance by dropping some activation quantization along with a little different weight perturbation and might not be restricted to flatness on calibration data (More evidence can be found in \autoref{tab_hessian}).
This inspires us to pursue a flat minimum from a general perspective, that only optimizing the target on calibration set is suboptimal to test set.

\begin{algorithm}[t]
    \caption{\textsc{QDrop} in one block for one batch}
    \label{alg:1}
    \KwIn{ the $k_{th}$ block from layer $i$ to layer $j$, a minibatch of quantized block input $\hat{\va}^{i-1}$, FP32 block input $\va^{i-1}$, FP32 block output $\va^{j}$, quantization dropping probability $p$.} 
    \{1. Forward propagation:\}\\
    During training phase, substitute $\hat{\va}^{i-1}$ with corresponding $\va^{i-1}$ at neuron-level with probability $p$ and mark the replaced input as $\tilde{\va}^{i-1}$ \;
    \For{$l=i$ to $j$}{
        $\va^l\leftarrow\mW^l \tilde{\va}^{l-1}$\; 
        $\hat{\va}^l\leftarrow$ Quantize($\va^l$)\;
        During training phase, randomly drop some $\hat{\va}^l$ with $\va^l$ as defined in \autoref{equ:QDrop} and get $\tilde{\va}^l$ \;
	   }
	\{2. Backward propagation:\}\\
	Compute $\Delta{\va}^j=\tilde{\va}^j-\va^j$\;
    Tune the weight by gradient descent \;
    \textbf{return} Quantized block \;
\end{algorithm}
\begin{figure}[b]
    \caption{Loss surface of the quantized weight for \textsc{QDrop}, Case 1 and 3 on test data and ResNet-18 W3A3. To better distinguish Case 1 and 3, we zoom into the local loss surface with perturbation $\vv_1$ and $\vv_2$ magnitude in [-0.025,0.025]. }
    \label{fig_loss_landscape}
    \begin{subfigure}[b]{0.32\textwidth}
        \centering
        \includegraphics[width=\textwidth]{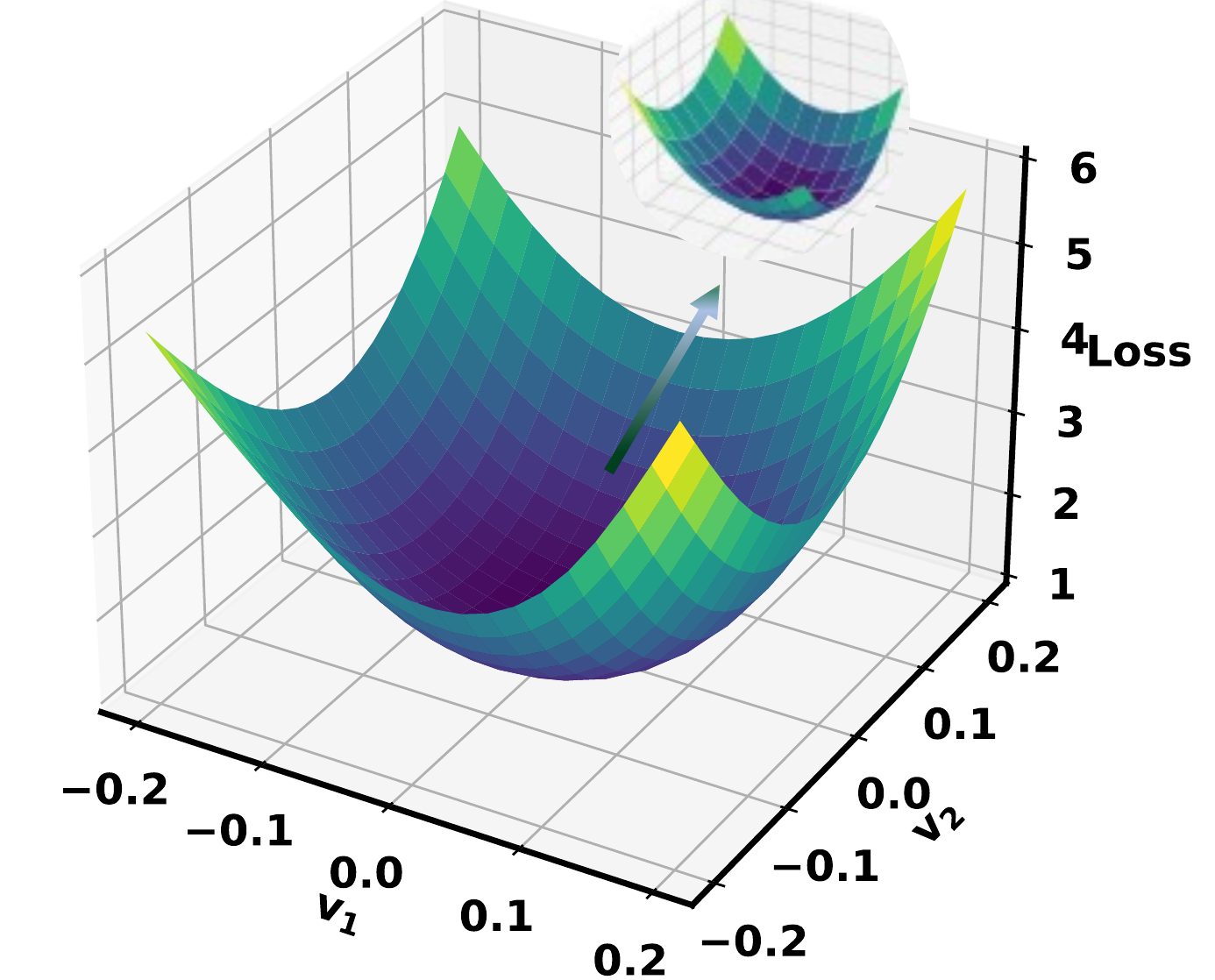}
        \caption{Case 1}
    \end{subfigure}
    \begin{subfigure}[b]{0.32\textwidth}
        \centering
        \includegraphics[width=\textwidth]{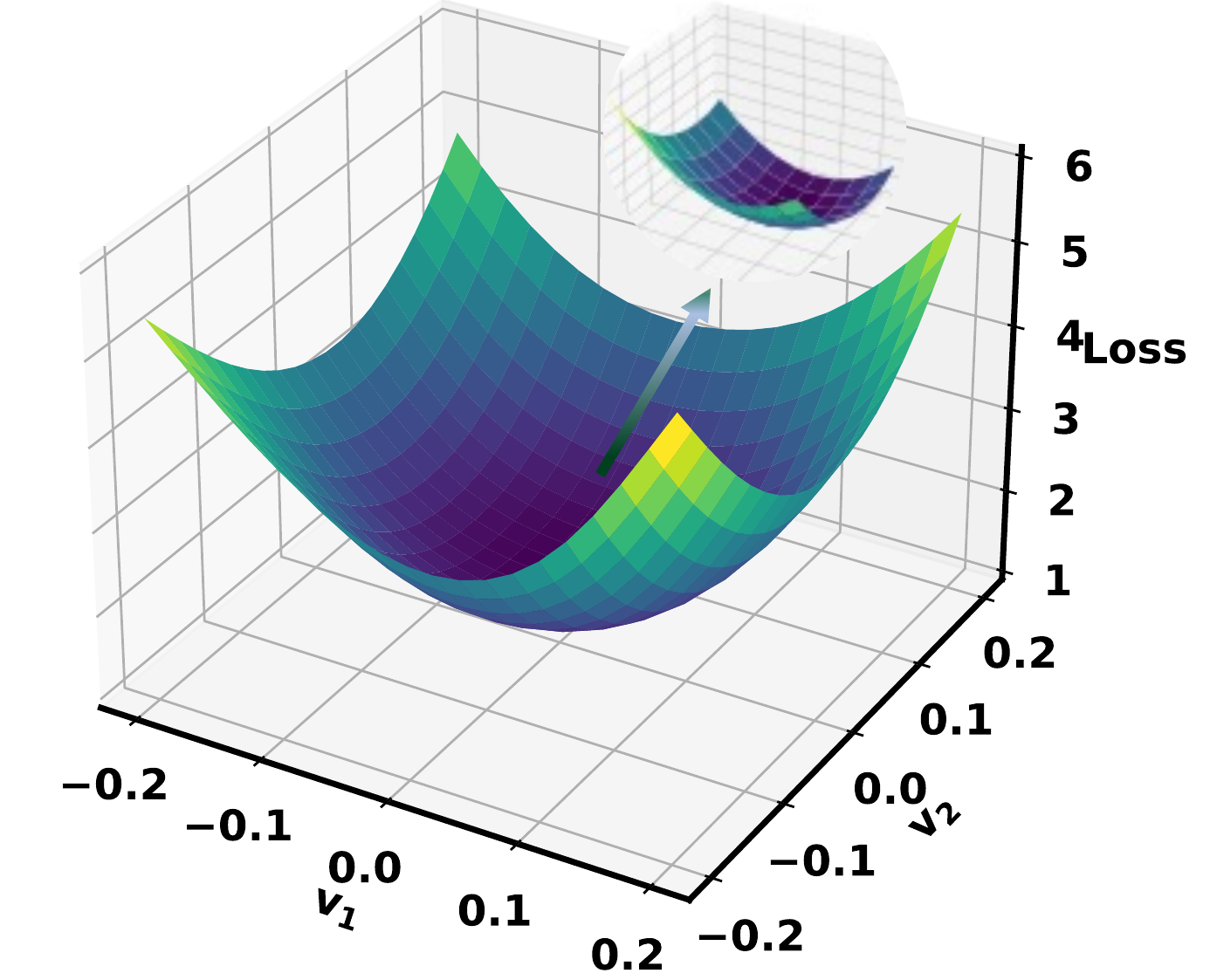}
        \caption{Case 3}
    \end{subfigure}
    \begin{subfigure}[b]{0.32\textwidth}
        \centering
        \includegraphics[width=\textwidth]{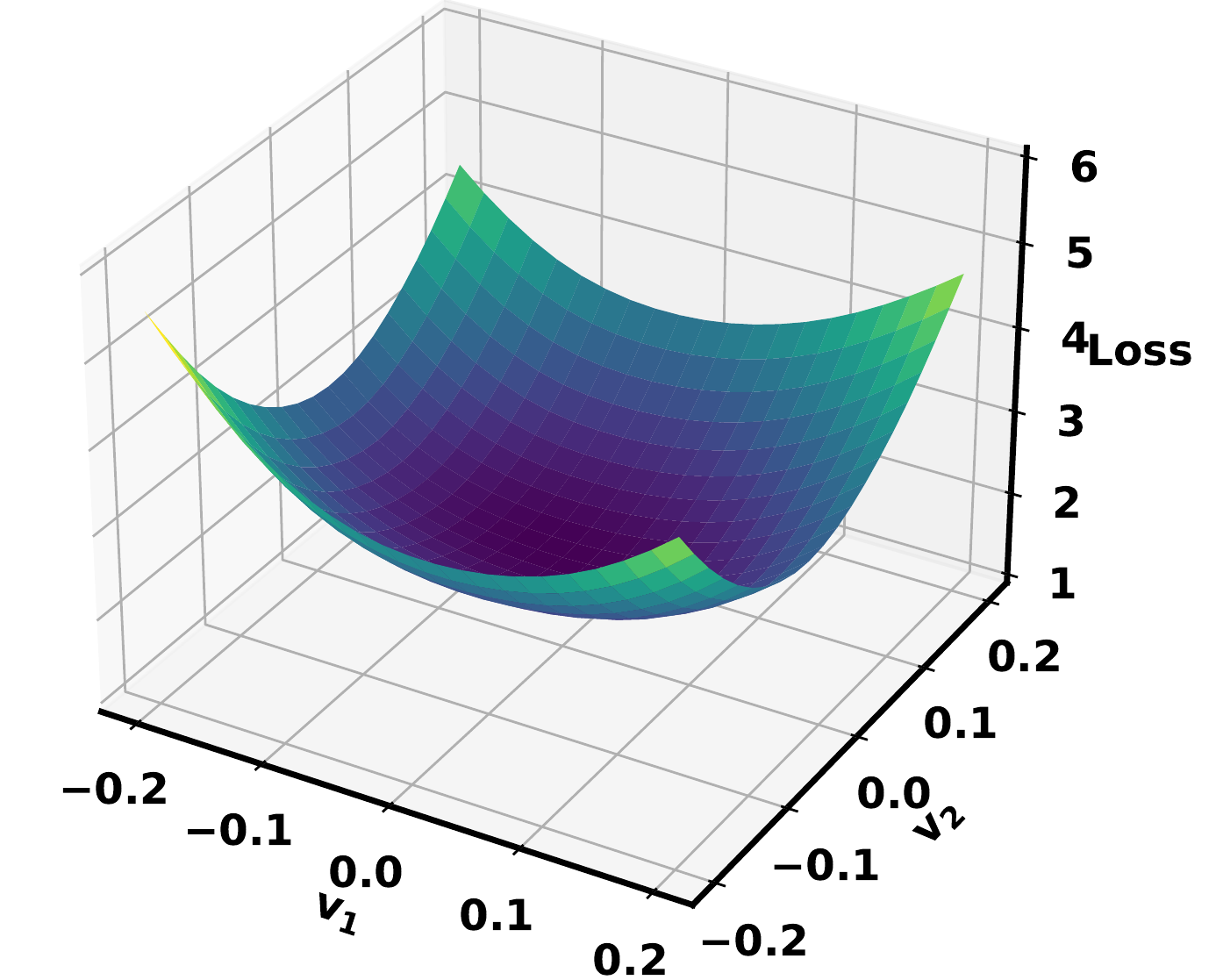}
        \caption{\textsc{QDrop}}
    \end{subfigure}
\end{figure}

\textbf{\textsc{QDrop}. } Inspired by this, we propose \textsc{QDrop} to further increase the flatness on as many directions as possible. In particular, we randomly disable and enable the quantization of the activation each forward pass:
\begin{equation} 
    \label{equ:QDrop}
    \textsc{QDrop}: u=
    \begin{cases}
    0 & \text{with probability $p$}  \\
    \frac{\hat{a}}{a} - 1 & \text{with probability $1-p$} 
    \end{cases}.
\end{equation}
We name it \textsc{QDrop} because it randomly drops the quantization of activation. Theoretically, by masking some $\vu(\vx)$ randomly, \textsc{QDrop} can have more diverse $\vv(\vx)$ and cover more directions of flatness thus flatter on test samples, which contributes to the final high accuracy. \autoref{fig_loss_landscape} support our analysis where \textsc{QDrop} has smoother loss landscape than Case 3, the winner among 3 cases on test data. Meanwhile, it is indeed a fine-grained version of Case 3 since Case 3 drops the quantization in a block-wise manner, whereas our \textsc{QDrop} operates in an element-wise way.
% it is indeed a fine-grained version of Case 3 since Case 3 drops the quantization in a block-wise manner, whereas our \textsc{QDrop} operates in an element-wise way.Theoretically, \textsc{QDrop} can introduce different directionsIt is a fine-grained version of Case 3 since Case 3 drops the quantization in a block-wise manner, whereas our \textsc{QDrop} operates in an element-wise way.  of noise. That is to say, different and random $\vu$ can diversify the directions of flatness, which contributes to the final high accuracy. 

\textbf{Discussions. }\textsc{QDrop} can be viewed as a generalized form of the existing schemes. Case 1 and 2 respectively corresponds to the dropping probability of $p=1$ and $p=0$. Case 3 is equivalent to setting the block being optimized with dropping probability $p=1$ and remains the quantization of other parts. Note that the $p$ obeys Bernoulli distribution and thus can be set as 0.5 for the maximal entropy~\citep{Qin_2020_CVPR}, which is helpful for flatness across various directions.

\textsc{QDrop} is easy to implement for various neural networks including CNNs and Transformers, and plug-and-play with little additional computational complexity. With \textsc{QDrop}, the complicated problem of choosing optimization order, i.e. different cases in \autoref{subsec:observation}, can be avoided. 
\begin{table}[t]
\footnotesize
    \caption{Effect of \textsc{QDrop}.}
    \centering
    \begin{adjustbox}{max width=\textwidth}
    \begin{tabular}{lrrrrrrr}
    \toprule
    \textbf{Method} & \textbf{Bits (W/A)} & \textbf{ResNet-18} & \textbf{ResNet-50} & \textbf{MobileNetV2} & \textbf{RegNet-600MF} & \textbf{RegNet-3.2GF} & \textbf{MNasNet-2.0} \\
    \midrule
    No Drop & 2/2 & 46.64 & 47.90 & 4.55 & 25.52 & 39.76 & 9.51\\
    \textsc{QDrop} & 2/2 & \textbf{51.14} & \textbf{54.74} & \textbf{8.46}
    & \textbf{38.90} & \textbf{52.36} & \textbf{22.70}\\
    % No Drop & 2/2 & 51.19 & 52.59 & 5.86 & 28.95 & 40.98 & 16.482\\
    % \textsc{QDrop}  & 2/2 & \textbf{54.72} & \textbf{58.67} & \textbf{13.05} & \textbf{41.49} & 
    % \textbf{55.11} & \textbf{28.77} \\
    \midrule
    No Drop & 2/4 & 64.16 & 69.60 & 51.61 & 61.52 & 70.29 & 60.00\\
    \textsc{QDrop} & 2/4 & \textbf{64.66} & \textbf{70.08} & \textbf{52.92} & \textbf{63.10} & \textbf{70.95} & \textbf{62.36} \\
%     No Drop & 2/3 & 62.03
%     & 66.76 & 38.06 & 55.38
%   & 65.08  & 47.612\\
%     \textsc{QDrop} & 2/3 & \textbf{62.91} &  \textbf{68.11} & \textbf{42.75} & \textbf{58.40}  & \textbf{68.00} &\textbf{55.80}  \\
    \bottomrule
    \end{tabular}
    \end{adjustbox}
    \label{tab_QDrop}
%\vspace{-0.2cm}
\end{table}
\vspace{-1em}
\section{Experiments }
\vspace{-1em}
\label{sec_experiments}
In this section, we conduct two sets of experiments to verify the effectiveness of \textsc{QDrop}. In \autoref{sec_ablation}, we first conduct an ablation study for the impact with and without dropping quantization and analyze the option of distinct dropping rates. In \autoref{sec_comparison}, we compare our method with other existing approaches across vision and language tasks including image classification on ImageNet, object detection on MS COCO, and natural language processing on GLUE benchmark and SQuAD.

\textbf{Implementation Details.}
Our code is based on PyTorch~\cite{paszke2019pytorch}. We set the default dropping probability $p$ as 0.5, except we explicitly mention it.  The weight tuning method is the same with \cite{nagel2020adaround, li2021brecq}. Each block or layer output is reconstructed for 20k iterations. For ImageNet dataset, we sample 1024 images as calibration set, while COCO we use 256 images. In NLP, we sample 1024 examples. We also keep the first and the last layer in 8-bit except NLP tasks and adopt per-channel weight quantization. We use W4A4 to represent 4-bit weight and activation quantization. More model choices and other setting is described in \autoref{sec_appendix_implementation}.

But to be noted, regular \textbf{first and last layer 8-bit} means 8-bit weight and input in first and last layer while \textsc{Brecq} uses another setting which not only keeps the first layer's input 8-bit but also the first layer's output (second layer's input). This would indeed perform better than the regular one with leaving one more layer's input 8-bit but may not be practical on the hardware. Therefore, we both compare with \textsc{Brecq}'s setting to show the superiority of our approach and experiment on the usual one to provide a practical baseline. Symbol $\dag$  is used to mark \textsc{Brecq}'s setting.
\subsection{Ablation study}
\label{sec_ablation}
\textbf{Effect of \textsc{QDrop}. }
We propose \textsc{QDrop} and here we would like to test the effect of PTQ with or without \textsc{QDrop}. We use ImageNet classification benchmark and quantize the weight parameters to 2-bit and quantize activation to 2/4-bit. As shown in \autoref{tab_QDrop}, \textsc{QDrop} improves the accuracy across all bit settings evaluated for 6 models on ImageNet. Furthermore, the gains are more obvious when applying \textsc{QDrop} to lightweight network architecture: 2.36\% increment for MNasNet under W2A4 and 12.6\% for RegNet-3.2GF with W2A2.

\textbf{Effect of Dropping Probability. }
We also explore
\begin{wrapfigure}{r}{7cm}
\vspace{-2.2em}
    \includegraphics[width=0.9\textwidth]{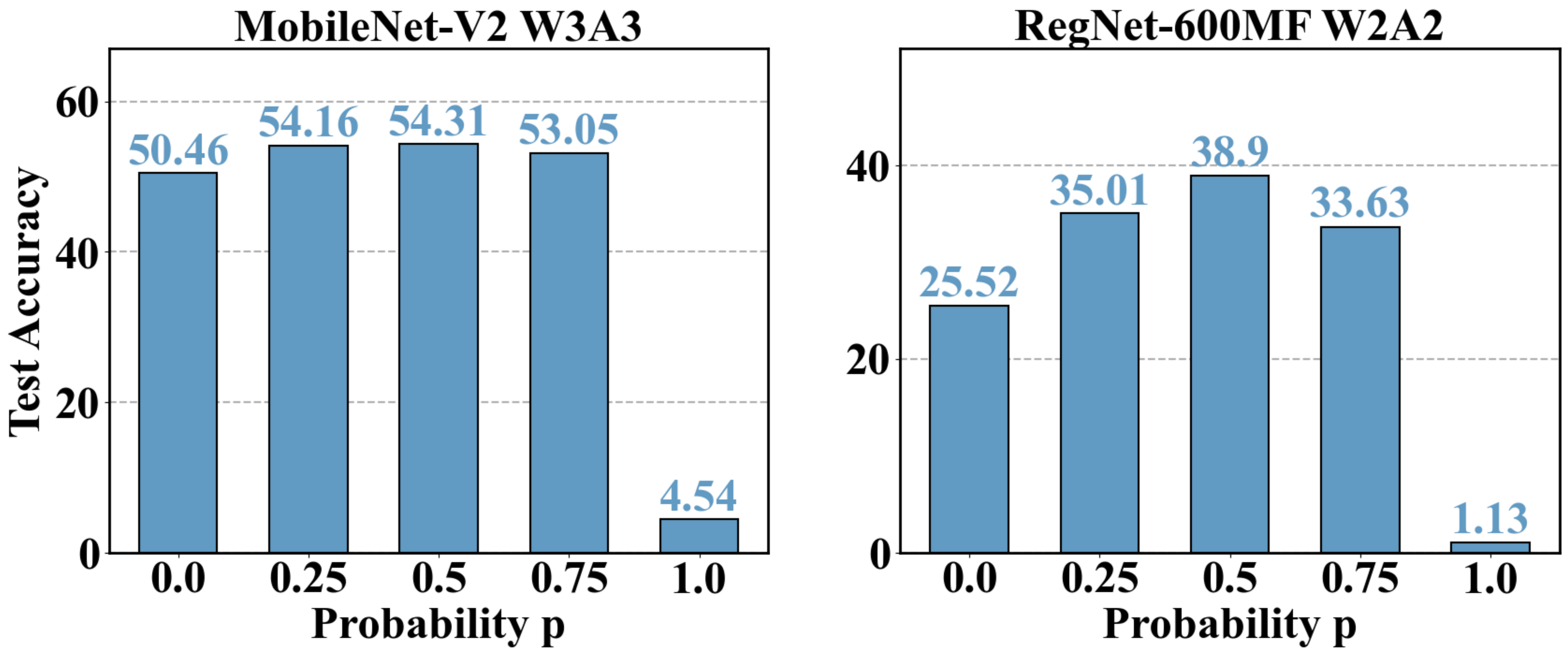}
    \caption{Impact of dropping probability on ImageNet.}
    \label{fig_QDrop_rate}
\end{wrapfigure}
the dropping probability in PTQ. We choose $p$ in [0,0.25,0.5,0.75,1] and test on MobileNetV2 and RegNet-600MF. The results are summarized in \autoref{fig_QDrop_rate}. We find 0.5 generally performs best among 5 candidates. Although there could be a fine-grained best solution for each architecture, we shall avoid cumbersome hyper-parameter search and continue using 0.5.
% As we also introduce activation quantization during weight reconstruction compared to \textsc{QDrop}, it is necessary to test the improvement brought only by \textsc{QDrop}. 
% Also, despite we use fixed rate, it is worth to study the influence of discrete probabilities to drop quantization. We perform the ablation study on ImageNet dataset across several models on 2-bit and 3-bit activation, where comparisons with or without \textsc{QDrop} are put in \autoref{fig} \autoref{tab_QDrop} and ResNet-50 and MobileNetV2 are adopted to analyze the  versatility of different quantization probabilities.
\subsection{Literature comparison}
\label{sec_comparison}
%我觉得这一节介绍已经在5开头说了，就不用再讲了。
%In this subsection, we first compare our scheme with other advanced methods on ImageNet classification task then generalize to MS COCO and even some NLP tasks to demonstrate the wider applicability. The reported results admit the strong power of \textsc{QDrop}, especially at extremely low-bit setting, which pumps fresh insights and more feasibility into low-bit post-training quantization.
\textbf{ImageNet. }
We choose ResNet-18 and -50~\citep{he2016resnet}, MobileNetV2~\citep{sandler2018mobilenetv2}, searched MNasNet~\citep{tan2019mnasnet} and RegNet~\citep{radosavovic2020regnet}. 
% We choose ResNet-18 and -50~\citep{he2016resnet}, MobileNetV2~\citep{sandler2018mobilenetv2} in ImageNet comparison. Besides these common architectures, we also compare the searched MNasNet~\citep{tan2019mnasnet} and RegNet~\citep{radosavovic2020regnet}. 
% Some baseline methods do not report the low bit cases, therefore we implement them based on the open-source codes with unified settings.
We summarize the results in~\autoref{tab_imagenet}.
First, the W4A4 quantization is investigated. It can be observed that \textsc{QDrop} provides $0\sim 3$\% accuracy uplift when compared to strong baselines including AdaRound, \textsc{Brecq}. As for the gap between our method and AdaQuant on W4A4, we argue that there are some discrepancies on settings such as positions of quantization nodes and put this explaination in \autoref{Appendix_quantization_node}. With W2A4 quantization, \textsc{QDrop} can improve the accuracy of ResNet-50 by 0.5\%, and RegNet-3.2GF by 4.6\%.
In addition, to fully exploit the limit of \textsc{QDrop}, we conduct more challenging cases with 2/3-bit weights and activations. According to the last two rows of~\autoref{tab_imagenet}, our proposed \textsc{QDrop} consistently achieves good results while existing methods suffer from non-negligible accuracy drop. 
For W3A3, the difference is even larger on MobileNetV2, where our method reaches 58\% accuracy and \textsc{Brecq} only gets 23\%. In the W2A2 setting, the PTQ becomes much harder. \textsc{QDrop} outperforms the competing method by a large margin: 12.18\% upswings for ResNet-18, 29.66\% for ResNet-50 and 51.49\% for RegNet-3.2GF.
\begin{table}[t]
\footnotesize
    \caption{Comparison among different post-training quantization strategies with low-bit activation in terms of accuracy on ImageNet. * represents for our implementation according to open-source codes and $\dag$ means using \textsc{Brecq}'s first and last layer 8-bit setting, which also keeps first layer's output 8-bit besides input and weight in the first and last layer.}
    \label{tab_imagenet}
    \centering
    \begin{adjustbox}{max width=\textwidth}
    \begin{tabular}{lrrrrrrr}
    \toprule
    \textbf{Methods} & \textbf{Bits (W/A)} & \textbf{Res18} & \textbf{Res50} & \textbf{MNV2} & \textbf{Reg600M} & \textbf{Reg3.2G} & \textbf{MNasx2} \\
    \midrule
    Full Prec. & 32/32 & 71.06 & 77.00 & 72.49 & 73.71 & 78.36 & 76.68\\
    \midrule
    ACIQ-Mix~\citep{banner2019aciq}& 4/4 & 67.00 & 73.80 & - & - & - & - \\
    ZeroQ~\citep{cai2020zeroq}* & 4/4 & 21.71 & 2.94 & 26.24 & 28.54 & 12.24 & 3.89 \\
    LAPQ~\citep{nahshan2019lapq} & 4/4 & 60.30 & 70.00 & 49.70 & 57.71* & 55.89* & 65.32*\\
    AdaQuant~\citep{hubara2021accurate} & 4/4 & \textbf{69.60} & \textbf{75.90} & 47.16* & - & - & - \\
    Bit-Split~\citep{wang2020bitsplit} & 4/4 & 67.56 & 73.71 & - & - & - & - \\
    AdaRound~\citep{nagel2020adaround}* & 4/4 & 67.96 & 73.88 & 61.52 & 68.20 & 73.85 & 68.86\\
    \textsc{QDrop}~(Ours) & 4/4 & 69.10 & 75.03 & \textbf{67.89} & \textbf{70.62} & \textbf{76.33} & \textbf{72.39} \\
    AdaRound\dag~\citep{nagel2020adaround}* & 4/4 & 69.36 & 74.76 & 64.33 & - & - & - \\
    \textsc{Brecq}\dag~\citep{li2021brecq} & 4/4 &  69.60 & 75.05 & 66.57 & 68.33  & 74.21 & 73.56\\
    \textsc{QDrop}\dag~(Ours) & 4/4 & \textbf{69.62} & 75.45 & \textbf{68.84} & \textbf{71.18} & \textbf{76.66} &\textbf{73.71} \\
     \midrule
    LAPQ~\citep{nahshan2019lapq}* & 2/4 & 0.18 & 0.14 & 0.13 & 0.17 & 0.12 & 0.18 \\
    AdaQuant~\citep{hubara2021accurate}* & 2/4 & 0.11 & 0.12  & 0.15 & - & - & - \\
    AdaRound~\citep{nagel2020adaround}* & 2/4 & 62.12 & 66.11 & 36.31 & 57.00 & 63.89 & 46.73\\
    \textsc{QDrop}~(Ours) & 2/4 & \textbf{64.66} & \textbf{70.08} &\textbf{52.92} & \textbf{63.10} & \textbf{70.95} & \textbf{62.36} \\
    AdaRound\dag~\citep{nagel2020adaround}* & 2/4 & 64.14 & 68.40 & 41.52 & 59.27 & 65.33 & 53.77 \\
    \textsc{Brecq}\dag~\citep{li2021brecq} & 2/4 & 64.80 & 70.29 & 53.34 & 59.31 & 67.15 & 63.01 \\
    \textsc{QDrop}\dag~(Ours) & 2/4 & \textbf{65.25} & \textbf{70.65} & \textbf{54.22} & \textbf{63.80} & \textbf{71.70} & \textbf{64.24} \\
    \midrule
    AdaQuant~\citep{hubara2021accurate}* & 3/3 & 60.09 & 67.46 & 2.23  & - & - & - \\
    \textsc{QDrop}~(Ours) & 3/3 & \textbf{65.56} & \textbf{71.07} & \textbf{54.27} & \textbf{64.53} & \textbf{71.43} & \textbf{63.47} \\
    AdaRound\dag~\citep{nagel2020adaround}* & 3/3 & 64.66  & 66.66 & 15.20  & 51.01 & 56.79 & 47.89 \\
    \textsc{Brecq}\dag~\citep{li2021brecq}* & 3/3 & 65.87 & 68.96 & 23.41 & 55.16 & 57.12 & 49.78  \\
    \textsc{QDrop}\dag~(Ours) & 3/3 & \textbf{66.75} & \textbf{72.38} & \textbf{57.98} & \textbf{65.54} & \textbf{72.51} & \textbf{66.81} \\
    \midrule
    \textsc{QDrop}~(Ours) & 2/2 & \textbf{51.14} & \textbf{54.74} & \textbf{8.46}
    & \textbf{38.90} & \textbf{52.36} & \textbf{22.70}\\
    \textsc{Brecq}\dag~\citep{li2021brecq}* & 2/2 & 42.54 & 29.01  & 0.24 & 3.58 & 3.62 & 0.61 \\
    \textsc{QDrop}\dag~(Ours) & 2/2 & \textbf{54.72} & \textbf{58.67} &  \textbf{13.05} & \textbf{41.47} & \textbf{55.11} & \textbf{28.77} \\
    \bottomrule
    \end{tabular}
    \end{adjustbox}
    \label{tab:results}
    \end{table}

\textbf{MS COCO. }
In this part, we validate the performance of \textsc{QDrop} on object detection task using MS COCO dataset. We use both two-stage Faster RCNN~\citep{ren2015frcnn} and one-stage RetinaNet~\citep{lin2017retinanet} models. Backbone are selected from ResNet-18, ResNet-50, and MobileNetV2. 
Note that we set the first layer and the last layer to 8-bit and do not quantize the head of the model, however, the neck (FPN) is quantized. Experiments show that W4A4 quantization using \textsc{QDrop} nearly do not affect Faster RCNN's mAP. For RethinaNet, our method has 5 mAP improvement on MobileNetV2 backbone. In low bit setting W2A4, our method shows great improvement both on Faster-RCNN and RetinaNet, up to 6.5 mAP. 
% About the indistinctive accuracy boost on $W2A8$, we potentially attribute it to 8-bit activation thus BRECQ might also be well-behaved without considering any activation quantization during optimizing weight.
\begin{table}[t]
\footnotesize
    %\caption{Comparison among typical post-training quantization strategies in terms of mAP on MS COCO. Note that refer to \textsc{Brecq}, we didn't quantize head and keep the first and last layer in backbone to 8-bit. Other notations align the upper table. }
    \caption{Comparison among typical post-training quantization strategies in terms of mAP on MS COCO. Note that refer to \textsc{Brecq}, we didn't quantize head and keep the first and last layer in backbone to 8-bit. Other notations align with \autoref{tab_imagenet}. }
    \centering
    \begin{adjustbox}{max width=\textwidth}
    \begin{tabular}{llccccccccc}
    \toprule
    \multirow{2}{6em}{\textbf{Method}} & \multirow{2}{6em}{\textbf{Bits (W/A)}} & \multicolumn{3}{c}{\textbf{Faster RCNN}} & \multicolumn{3}{c}{\textbf{RetinaNet}} \\
    \cmidrule(l{2pt}r{2pt}){3-5}   
    \cmidrule(l{2pt}r{2pt}){6-8} 
    &  & ResNet-18 & ResNet-50 & MobileNetV2 & ResNet-18 & ResNet-50 & MobileNetV2\\
    \midrule
    Full Prec. & 32/32 & 34.60 & 38.56 & 33.47 & 33.22
  & 36.80 & 32.63\\
    \midrule
    AdaRound* & 4/4 & 32.57  & 34.47 & 26.11 & 31.04 & 33.51 & 24.99\\
    \textsc{Brecq}\dag* & 4/4 & 32.58
  & 34.59 & 26.58 & 31.21 & 33.47 & 24.84\\
    \textsc{QDrop} & 4/4 & \textbf{33.37} & \textbf{36.96} &  \textbf{30.88}& \textbf{31.99}  & \textbf{35.67} &\textbf{29.75}\\
    \midrule
    AdaRound* & 2/8 & 30.54 & 33.15 & 25.35 & 29.30  & 32.22 & 24.22\\
    \textsc{Brecq}\dag & 2/8 & 31.82 & 34.23 & 27.54 & \textbf{31.42} & 34.75 & \textbf{27.59}\\
    \textsc{QDrop} & 2/8 & \textbf{32.20} & \textbf{36.14} & \textbf{28.48} & 31.03 & \textbf{34.84}
  & 27.42\\
    \midrule
    \textsc{Brecq}\dag* & 2/4 & 29.92
  & 30.23 & 19.35  & 28.73 & 29.47
  & 18.46\\
    \textsc{QDrop} & 2/4 & \textbf{31.01} & \textbf{34.23} & \textbf{25.04} & \textbf{29.69} & \textbf{33.01} & \textbf{24.89}\\
    \bottomrule
    \end{tabular}
    \end{adjustbox}
    \label{tab_coco}
\vspace{-1em}
\end{table}

\textbf{GLUE benchmark and SQuAD. }
%Transformer has made significant progress in the language processing field. And Bert, as a representative baseline in NLP, is getting larger and many compression works have been proposed. 
We test \textsc{QDrop} in NLP tasks including the GLUE benchmark and SQuAD1.1. They are all conducted on the typical NLP model, i.e, BERT ~\citep{devlin2018bert}. Compared with those QAT methods ~\citep{bai2020binarybert}, which usually adopt data augmentation trick to achieve dozens of times the original data, we only randomly extract 1024 examples without any extra data processing.
% and quantize all the parts of BERT while some works ~\citep{zafrir2019q8bert} ignore quantization of attention probability.
Besides AdaQuant and \textsc{Brecq}, which suffer a huge accuracy degradation, our \textsc{QDrop} surpasses No Drop all the tasks, specifically on QNLI(8.7\%), QQP(4.6\%) and RTE(7.2\%). As for SST-2, despite little enhancement by dropping quantization, it is indeed close to the FP32 value within 4.4\%. And for STS-B, we argue that the original fine-tuned model is trained with limited data, which might not be very representative.
\begin{table}[t]
\footnotesize
    \caption{Performance on NLP tasks compared to other methods on E8W4A4. Here, we use symbol EeWwAa to additionally express the embedding bit and conduct experiments on GLUE and SQuAD1.1.}
    \label{tab_nlp}
    \centering
    \begin{adjustbox}{max width=\textwidth}
    \begin{tabular}{lccccccccccc}
    \toprule
    \multirow{2}{*}{\textbf{Method}} & \textbf{SST-2} & \textbf{QNLI} & \textbf{QQP} & \textbf{STS-B} & \textbf{MNLI} & \textbf{MRPC} & \textbf{RTE} & \textbf{CoLA}  &\textbf{SQuAD1.1}   \\
    & (acc) & (acc) & (f1/acc) & (Pearson/Spearman corr) & (acc m/mm) & (acc) & (acc) & (Matthews corr) &(f1) \\
    \midrule
    Full Prec. &  92.43 & 91.54 & 87.81/90.91 & 88.04/ 87.63 & 84.57/84.46
 & 87.71 & 72.56 & 53.39 & 88.42 \\
    \midrule
    AdaQuant* & - & - & - & - & - & - & - & - & 5.17  \\
    \textsc{Brecq}* & 50.86 & 50.72 & 4.47/62.28 & \textsc{5.94/ 6.39} & 31.91/31.81  & 31.69 & 52.34 & 0.946 & 68.58\\
    \textsc{No Drop} & 87.94 & 68.05 & 68.09/76.69 & 82.24/81.68 & 69.19/71.28 & 77.39 &53.43 & 40.17 & 75.97 \\
    \textsc{QDrop} & \textbf{88.06} & \textbf{76.75} & \textbf{72.66/79.04} & \textbf{82.39/81.88} & \textbf{71.43/73.70} & \textbf{79.15} & \textbf{60.65} & \textbf{40.85} & \textbf{77.26}  \\
     \bottomrule
    \end{tabular}
    \end{adjustbox}
%\vspace{-1.5em}
\end{table}

\vspace{-1em}
\subsection{Robustness of \textsc{QDrop}}
\vspace{-1em}
%cross domain与少量数据
In this part, we discuss the effectiveness of \textsc{QDrop} under more challenging situations including even less data and cross-domain ones.
Concerning about size of calibration data, we consider another 4 options. It can be observed that dropping some quantization behaves better under each setting and is even comparable with No Drop with half of the original calibration data.
Motivated by~\cite{Yu_2021_CVPR_CrossDomain}, we also reconstruct block output by 1024 examples from out-of-domain data, i.e, CIFAR100~\citep{krizhevsky2009learning}, MS COCO, and test on ImageNet. Results are available in \autoref{tab_cross_domain}, where our \textsc{QDrop} still works steadily. 
% It is worth noting that on MS COCO W4A4, it actually reaches the accuracy from calibrating by in-domain data with and without dropping quantization, which implies the similar data distribution between MS COCO and ImageNet.

\begin{minipage}[t]{\textwidth}
\begin{minipage}[b]{0.5\textwidth}
    \centering
    \vspace{0pt}
    \includegraphics[width=\textwidth]{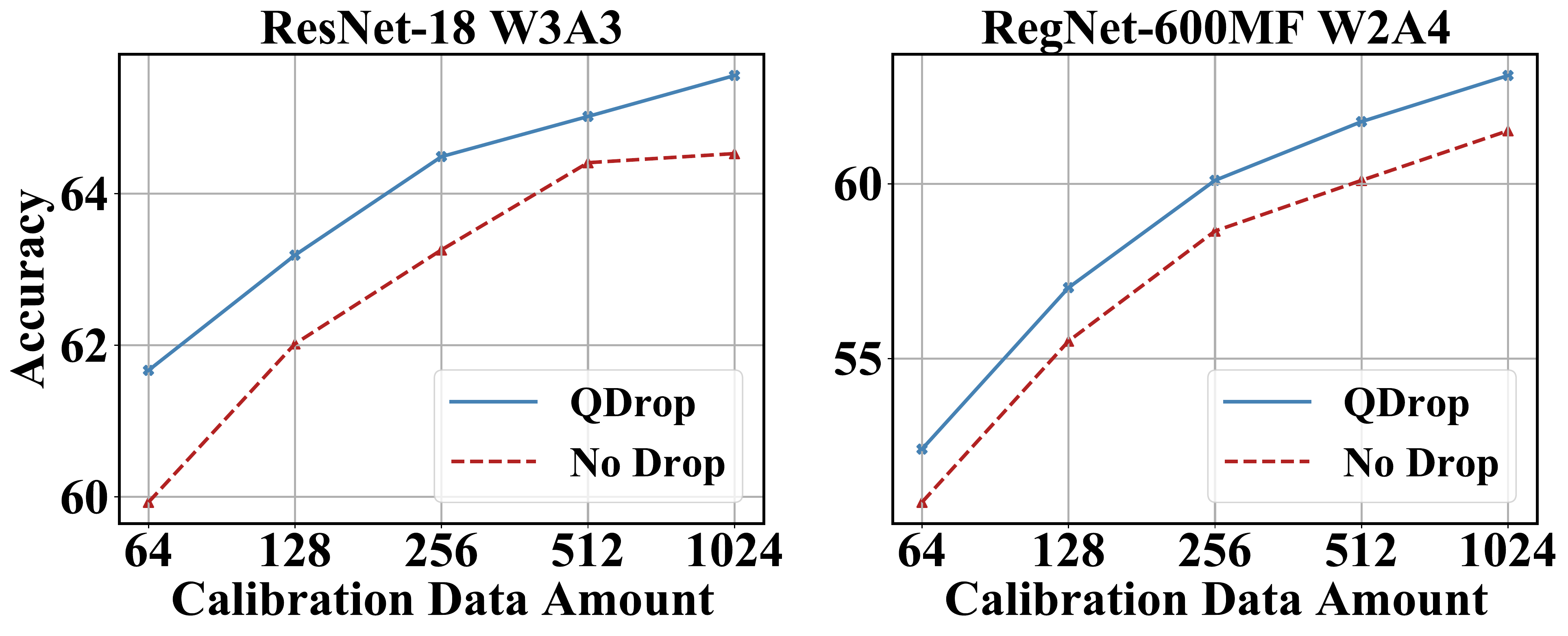}
    \captionof{figure}{Impact of calibration data size on ImageNet. }
    \label{fig_qdrop_samples}
\end{minipage}
\hfill
\begin{minipage}[b]{0.48\textwidth}
    %\vfill
    \centering
    \vspace{0pt}
    \begin{adjustbox}{max width=\textwidth}
    \begin{tabular}{lrrrr}\toprule 
        \textbf{Calibration} & \textbf{Bits (W/A)} & \textbf{No Drop} & \textbf{\textsc{QDrop}} \\ 
        \midrule
        MS COCO & 4/4 & 68.64 &  \textbf{68.94}  \\
        \midrule
        MS COCO & 3/3 & 64.12 &  \textbf{65.15}  \\
        \midrule
        CIFAR100 & 4/4 & 46.83 & \textbf{52.88} \\
        \midrule
        CIFAR100 & 3/3 & 21.74 & \textbf{28.58} \\
        \bottomrule
    \end{tabular}  
    \end{adjustbox}
    \captionof{table}{Cross domain data. }
    \label{tab_cross_domain}
\end{minipage}
\end{minipage}
\vspace{-1.5em}
\section{Conclusion}
\vspace{-0.5em}
In this paper, we have introduced \textsc{QDrop}, a novel mechanism for post-training quantization. 
QDrop aims to achieve good test accuracy given a tiny calibration set. This is done by optimization towards a flat minima. 
We dissect the PTQ objective theoretically into a flatness problem and improve the flatness from a general perspective.
% \textsc{QDrop} is easy to implement, computationally fast, and can be combined with existing methods to achieve further improvement. 
We comprehensively verify the effectiveness of \textsc{QDrop} on a large variety of tasks. It can achieve a nearly lossless 4-bit quantized network and can significantly improve the 2-bit quantization results.
% establishing a new state of the art for post-training quantization.

\section*{Acknowledgment}
We sincerely thank the anonymous reviewers for their serious reviews and valuable suggestions to make this better. And we thank Xiangguo Zhang and Sheng Chen for their kind of help of this work. This work was supported in part by 
the National Natural Science Foundation of China under Grant 62022009 and Grant 61872021, the SenseTime Research Fund for Young Scholars, and the Beijing Nova Program of Science and Technology under Grant Z191100001119050.
\bibliography{iclr2022_conference}
\bibliographystyle{iclr2022_conference}

\newpage
\appendix

\section{Noise Form Choice}
\label{appendix_noise}
% In this section, we first show $u$ is equivalently transformed from the generalized noise (additive one) under the quantization then we discuss the advantage of the multiplicative form noise and its actual meaning.

As mentioned in \autoref{sec_preliminary}, we refer the quantizer to $\hat{a}=\lfloor \frac{a}{s} \rceil\cdot {s}$, where $s$ is the step size that can be learned or be determined by collecting activation distribution in PTQ. By marking the rounding error as $c$, which subjects to $\mathcal{U}[-0.5,0.5]$, the additive noise $e=\hat{a}-a$ can be denoted as $c\cdot s$. However, this traditional noise does not decouple the noise from step size $s$ and the range of activations. The range of noise $e$ will vary when the range of activations change, making it complicated to analyze across the whole network in a unified form. Some existing papers ~\citep{jiang2019fantastic} also indicate the problem that the additive noise doesn't take parameters' magnitude into account and suggest a multiplicative form form ~\citep{keskar2016large}. To be specific, they claim that:
\begin{quote}
    Perturbing the parameters without taking their magnitude into account can cause many of them to switch signs. Therefore, one cannot apply large perturbations to the model without changing the loss significantly. One possible modification to improve the perturbations is to choose the perturbation magnitude based on the magnitude of the parameter. In that case, it is guaranteed that if the magnitude of perturbation is less than the magnitude of the parameter, then the sign of the parameter does not change.
\end{quote}

To eliminate the influence induced by the range of activations, we take the noise of multiplicative form: $\hat{a}=(1+u)\cdot a$. In the quantization background, we denote the fixed-point integer value with respect to $a$ as $\bar{a}$ ($a=(\bar{a}+c)\cdot s$ and $\hat{a}=\bar{a}\cdot s$). And then $u$ can be denoted as:
\begin{equation}
\begin{aligned}
u&=\frac{\hat{a}}{a}-1\\
&=\frac{\bar{a}\cdot s}{(\bar{a}+c)\cdot s}-1\\
&=\frac{\bar{a}}{\bar{a}+c}-1\\
&=\frac{-c}{\bar{a}+c}.
\end{aligned}
\end{equation}
To be noted, $u$ is derived from the definition of the quantizer and can be equivalently transformed from the generalized noise here. 

From this formulation, we can find that the range of $u$ is not related to the activation range or step size $s$ and is only influenced by the rounding error and the bit-width.
In a word, the multiplicative form has its pysical meaning in quantization background and is beneficial as discussed above.

% The above definition of $u$ correlates with $\bar{a}$, the fixed-point, and $c$, the round error. It can be clearly seen that $u$ increases when bits, which restricts the magnitude of $\bar{a}$, goes down.
% In the quantization background, the fixed-point integer $k$ is actually acquired by dividing the step size $s$ from activations. Thus, noise $u$ removes the scaling influence of the quantizer compared to the naive additive noise. 
%Also, $c$ subjects to $\mathcal{U}[-0.5, 0.5]$, activations can be viewed as being sampled from some gaussian distribution~\citep{cai2017deep}. As step size $s$ is determined per-layer, $\vu$ in one layer can also be viewed as some independent variables that sampled from the same distribution.

\section{Proof of \autoref{lemma_transform} and \autoref{theorem1}}
We demonstrate them by considering the situation of (1) fully connected and (2) convolutional networks separately. As transformation with fully connected ones has been revealed in main body, we add some extra illustration here and mainly target at the convolutional layers. 

\label{appendix_proof}
\subsection{Fully Connected networks }
\label{appendix_fc_proof}
Here, we give the proof of \autoref{eq_example} by leveraging the definition of FC layers in \autoref{eq_fc}. We first look at each input sample and temporarily omit $\vx$ in the notation below for simplicity. 

With activation noise $\vu$, 
\begin{equation}
\begin{aligned}
\vz_i^{(\ell+1)}&=\sum_j\mW_{i,j}^{(\ell)}\cdot (1+\vu_{j}^{(\ell)})\cdot \va_j^{(\ell)}\\
&=\sum_j(1+\vu_j^{(\ell)})\cdot \mW_{i,j}^{(\ell)}\cdot \va_j^{(\ell)}.
\end{aligned}
\end{equation}
By taking $\mV_{i,j}^{(\ell)}=\vu_j^{(\ell)}$, we have
\begin{equation}
\begin{aligned}
\vz_i^{(\ell+1)}
&=\sum_j(1+\mV_{i,j}^{(\ell)})\cdot \mW_{i,j}^{(\ell)}\cdot \va_j^{(\ell)}.
\end{aligned}
\end{equation}
Thus, with proper constructed $\vv$, the noise on activation ($\vu$) can be viewed as the perturbation on weight ($\vv$). 

For every layer, we can conduct this transformation from $\vu$ to $\vv$. Therefore, considering operating on the quantized weight and calibration data, optimizing $\E_{\vx\sim \mathcal{D}_c}[L(\hat{\vw},\vx,\bm{1}+\vu(\vx))]$ can be approximated to optimizing $\E_{\vx\sim \mathcal{D}_c}[L(\hat{\vw}\odot (\bm{1}+\vv(\vx)),\vx, \bm{1})]$. Then we have
\begin{equation}
    \E_{\vx\sim\mathcal{D}_c}[L(\hat{\vw}, \vx, \bm{1}+\vu({\vx})) - L(\vw, \vx, \bm{1})] \approx \E_{\vx\sim\mathcal{D}_c}[L(\hat{\vw}\odot(\bm{1}+\vv({\vx})),\vx, \bm{1}) - L(\vw,\vx,\bm{1})].
\end{equation}
Now, \autoref{lemma_transform} is proved for the case of fully connected networks.

\subsection{Convolutional networks }
\label{appendix_conv_proof}
We first look at each input sample and temporarily omit $\vx$ in the notation below for simplicity.  One layer in the network $\gG$ can be interpreted as:
\begin{equation}
    \mA_{i,j}^{(\ell+1)}=f(\mZ_{i,j}^{(\ell+1)})=f(\sum_{p,q}\mW^{(\ell)}_{p,q}\cdot \mA_{i+p,j+q}^{(\ell)}),
\end{equation}
where the $f(\cdot)$ is the activation function, $(p,q)$ pair represents for one element in weight matrix and $(i,j)$ pair represents for one element in activation matrix.

Due to the complicated design of convolutional structure, based on $\gG$ we introduce two networks $\gG_1, \gG_2$ to make proofs more clearly. 
\begin{definition}
$\gG_1$ means inserting random noise on activations, $\gG_2$ means sticking random variables into weights. 
\begin{equation}
\begin{aligned}
\gG_1:&
\mA_{i,j}^{(\ell+1)}=f(\mZ_{i,j}^{(\ell+1)})=f(\sum_{p,q} \mW^{(\ell)}_{p,q} \cdot (1+\mU_{i+p,j+q}^{(\ell)})\cdot \mA_{i+p,j+q}^{(\ell)})\\
\gG_2:&
\mA_{i,j}^{(\ell+1)}=f(\mZ_{i,j}^{(\ell+1)})=f(\sum_{p,q}(1+\mV_{p,q}^{(\ell)}) \cdot \mW_{p,q}^{(\ell)} \cdot \mA_{i+p,j+q}^{(\ell)}))
\end{aligned}
\end{equation}
\end{definition}
\begin{definition}
We mark losses of network $\gG_1$ and $\gG_2$ in the following way:
\begin{equation}
\begin{aligned}
&\textit{Loss of } \gG_1: L(\vw,\vx,\bm{1}+\vu) \qquad \textit{when taking } \vu=\bm{0}, \textit{it becomes } L(\vw,\vx,\bm{1}) \\
&\textit{Loss of } \gG_2: L(\vw\odot(\bm{1}+\vv),\vx) \quad \textit{when taking } \vv=\bm{0}, \textit{it becomes } L(\vw\odot \bm{1},\vx).
\end{aligned}
\end{equation}
% Loss of network $\gG_1$: $L(\vw,\vx,\bm{1}+\vu)$ \qquad when taking $\vu=\bm{0}$, it becomes $L(\vw,\vx,\bm{1})$.

% Loss of network $\gG_2$: $L(\vw\odot(\bm{1}+\vv),\vx)$ \quad when taking $\vv=\bm{0}$, it becomes $L(\vw\odot \bm{1},\vx)$.
\end{definition}

With the definitions, we first prove that $\gG_1$ and $\gG_2$ share common parts in their first-order derivative to its noise in \autoref{lemma_derivative}.
\begin{lemma}
\label{lemma_derivative}
Assuming the same weight and taking $\vu=\bm{0}$ and $\vv=\bm{0}$ for $\gG_1$ and $\gG_2$, we have
\begin{equation}
\begin{aligned}
\frac{\partial{L(\vw,\vx, \bm{1})}}{\partial{\mU_{i,j}^{(\ell)}}}&=\sum_{p,q}\mT_{(i,j),(p,q)}^{(\ell)}
\\
\frac{\partial{L(\vw\odot \bm{1},\vx)}}{\partial{\mV_{p,q}^{(\ell)}}}&=\sum_{i,j}\mT_{(i,j),(p,q)}^{(\ell)},
\end{aligned}
\end{equation}
where 
\begin{equation}
\begin{aligned}
\mT_{(i,j),(p,q)}^{(\ell)}&=\frac{\partial{L(\vw,\vx,\bm{1})}}{\partial{\mA_{i-p,j-q}^{(\ell+1)}}}\cdot f'(\mZ_{i-p,j-q}^{(\ell+1)})\cdot \mW_{p,q}^{(\ell)}\cdot \mA_{i,j}^{(\ell)}\\
&=\frac{\partial{L(\vw\odot \bm{1},\vx)}}{\partial{\mA_{i-p,j-q}^{(\ell+1)}}}\cdot f'(\mZ_{i-p,j-q}^{(\ell+1)})\cdot \mW_{p,q}^{(\ell)}\cdot \mA_{i,j}^{(\ell)}.
\end{aligned}
\end{equation}
Note that with $\vu=\bm{0}$ and $\vv=\bm{0}$, the activations are the same for the two networks so we don't use different notations here. 
\end{lemma}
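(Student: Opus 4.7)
My plan is to establish both equalities by a direct chain-rule computation and then to read off the common summand $\mT^{(\ell)}_{(i,j),(p,q)}$. The only substantive content beyond routine calculus is a careful re-indexing of the convolution sum, so I would be brief about the calculus itself and spend most of the care on the index translation.

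First, I would compute $\partial L/\partial \mU_{i,j}^{(\ell)}$ for $\gG_1$ evaluated at $\vu=\bm{0}$. The entry $\mU_{i,j}^{(\ell)}$ influences the pre-activation $\mZ_{i',j'}^{(\ell+1)}$ exactly when there is a kernel offset $(p,q)$ with $(i'+p,j'+q)=(i,j)$, i.e.\ when $(i',j')=(i-p,j-q)$, and its local multiplier at that position is $\mW_{p,q}^{(\ell)}\cdot\mA_{i,j}^{(\ell)}$. Propagating through $\mA_{i-p,j-q}^{(\ell+1)}=f(\mZ_{i-p,j-q}^{(\ell+1)})$ and summing over all valid $(p,q)$ immediately yields $\sum_{p,q}\mT^{(\ell)}_{(i,j),(p,q)}$, which is the first claimed identity.

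Next, I would compute $\partial L/\partial \mV_{p,q}^{(\ell)}$ for $\gG_2$ evaluated at $\vv=\bm{0}$. Here $\mV_{p,q}^{(\ell)}$ multiplies every output $\mA_{i',j'}^{(\ell+1)}$ with local factor $\mW_{p,q}^{(\ell)}\cdot\mA_{i'+p,j'+q}^{(\ell)}$, producing a sum over output positions $(i',j')$. The substitution $(i,j)=(i'+p,j'+q)$ then rewrites this sum exactly as $\sum_{i,j}\mT^{(\ell)}_{(i,j),(p,q)}$ in the advertised notation, giving the second claimed identity.

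To close, I would observe that at $\vu=\bm{0}$ and $\vv=\bm{0}$ the networks $\gG_1$ and $\gG_2$ share identical forward activations and pre-activations, so the upstream sensitivities $\partial L/\partial \mA^{(\ell+1)}_{i-p,j-q}$ coincide across the two networks and the two displayed expressions for $\mT^{(\ell)}_{(i,j),(p,q)}$ agree. The main obstacle I anticipate is purely clerical: keeping track of the index change between a sum over output positions and a sum over kernel positions, and handling out-of-range indices (which contribute zero and can be folded into the sum without changing anything). Once this bookkeeping is in place the lemma follows with no further work.
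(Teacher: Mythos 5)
Your proposal is correct and follows essentially the same route as the paper's proof: a direct chain-rule computation for each network, the re-indexing $(i,j)\mapsto(i-p,j-q)$ of the sum over output positions in the $\mV_{p,q}^{(\ell)}$ derivative, and the observation that at $\vu=\bm{0}$, $\vv=\bm{0}$ the two networks share identical activations so the two expressions for $\mT^{(\ell)}_{(i,j),(p,q)}$ coincide. No gaps; your extra remark about out-of-range indices contributing zero is a harmless clarification the paper leaves implicit.
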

\begin{proof}
\begin{equation}
\begin{aligned}
    \frac{\partial{L(\vw,\vx,\bm{1})}}{\partial{\mU_{i,j}^{(\ell)}}}
    &=\sum_{p,q}\frac{\partial{L(\vw,\vx,\bm{1})}}{\partial{\mA_{i-p,j-q}^{(\ell+1)}}}\cdot f'(\mZ_{i-p,j-q}^{(\ell+1)})\cdot \mW^{(\ell)}_{p,q}\cdot \mA^{(\ell)}_{i,j}
\end{aligned}
\end{equation}
% where the activation function satisfies $f(x)=f'(x)\cdot x$ and $f'(\cdot)$ is the derivative. 
\begin{equation}
\begin{aligned}
    \frac{\partial{L(\vw\odot \bm{1},\vx)}}{\partial{\mV_{p,q}^{(\ell)}}}
    &=\sum_{i,j}\frac{\partial{L(\vw\odot \bm{1},\vx)}}{\partial{\mA_{i,j}^{(\ell+1)}}}\cdot f'(\mZ_{i,j}^{(\ell+1)})\cdot \mW^{(\ell)}_{p,q}\cdot \mA^{(\ell)}_{i+p,j+q}\\
    &=\sum_{i,j}\frac{\partial{L(\vw\odot \bm{1},\vx)}}{\partial{\mA_{i-p,j-q}^{(\ell+1)}}}\cdot f'(\mZ_{i-p,j-q}^{(\ell+1)})\cdot \mW^{(\ell)}_{p,q}\cdot \mA^{(\ell)}_{i,j}\\
\end{aligned}
\end{equation}
Because of taking the same weight, and calculating the derivatives to $\mU_{i,j}^{(\ell)}$ at $\vu=0$ and derivatives to $\mV_{i,j}^{(\ell)}$ at $\vv=0$, we have the same activation values for these two networks. Therefore, \autoref{eq_T} holds.
\begin{equation}
\label{eq_T}
    \frac{\partial{L(\vw,\vx, \bm{1})}}{\partial{\mA_{i-p,j-q}}^{(\ell+1)}}\cdot f'(\mZ_{i-p,j-q}^{(\ell+1)})\cdot \mW^{(\ell)}_{p,q}\cdot \mA^{(\ell)}_{i,j}
    =\frac{\partial{L(\vw\odot \bm{1},\vx)}}{\partial{\mA_{i-p,j-q}}^{(\ell+1)}}\cdot f'(\mZ_{i-p,j-q}^{(\ell+1)})\cdot \mW^{(\ell)}_{p,q}\cdot \mA^{(\ell)}_{i,j}
\end{equation}
By marking the above value as $\mT_{(i,j),(p,q)}^{(\ell)}$, we can get \autoref{lemma_derivative}.
\end{proof}

Based on \autoref{lemma_derivative}, we further derive \autoref{theorem2} as the pre-condition of the final \autoref{lemma_transform}.

\begin{theorem}
\label{theorem2}
By taking 
\begin{equation}
    \mV_{p,q}^{(\ell)}=\frac{\sum_{i,j}\mU_{i,j}^{(\ell)}\cdot \mT_{(i,j),(p,q)}^{(\ell)}}{\sum_{i,j}\mT_{(i,j),(p,q)}^{(\ell)}},
\end{equation}
we have 
\begin{equation}
{\vu}^{\top}\nabla_{\vu}L(\vw,\vx, \bm{1})={\vv}^{\top}\nabla_{\vv}L(\vw\odot{\bm{1}},\vx).
\end{equation}
% Based on \autoref{lemma_derivative}, corresponding to activation quantization noise $\vu$, there exist random $\vv$ noise in network $\gG_2$ such that
\end{theorem}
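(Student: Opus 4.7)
The plan is to expand both inner products coordinate-wise, substitute the expressions for the partial derivatives supplied by \autoref{lemma_derivative}, and verify that the proposed choice of $\mV_{p,q}^{(\ell)}$ makes the two sums coincide term by term in $(\ell,p,q)$. Because the theorem reduces to an algebraic identity once the derivatives are known, no further analytic content is needed beyond what the previous lemma provides.

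First I would write the left-hand side as a sum over all noise coordinates,
\begin{equation*}
\vu^{\top}\nabla_{\vu}L(\vw,\vx,\bm{1})=\sum_{\ell}\sum_{i,j}\mU_{i,j}^{(\ell)}\cdot\frac{\partial L(\vw,\vx,\bm{1})}{\partial\mU_{i,j}^{(\ell)}},
\end{equation*}
and then invoke the first identity in \autoref{lemma_derivative} to replace each partial derivative by $\sum_{p,q}\mT_{(i,j),(p,q)}^{(\ell)}$. This turns the left-hand side into the double sum $\sum_{\ell}\sum_{i,j,p,q}\mU_{i,j}^{(\ell)}\mT_{(i,j),(p,q)}^{(\ell)}$.

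Next I would treat the right-hand side symmetrically: expand $\vv^{\top}\nabla_{\vv}L(\vw\odot\bm{1},\vx)=\sum_{\ell}\sum_{p,q}\mV_{p,q}^{(\ell)}\cdot\partial L/\partial\mV_{p,q}^{(\ell)}$, apply the second identity in \autoref{lemma_derivative} to replace each partial by $\sum_{i,j}\mT_{(i,j),(p,q)}^{(\ell)}$, and then plug in the stated definition of $\mV_{p,q}^{(\ell)}$. The denominator $\sum_{i,j}\mT_{(i,j),(p,q)}^{(\ell)}$ cancels cleanly against the factor introduced by the derivative, leaving $\sum_{\ell}\sum_{p,q}\sum_{i,j}\mU_{i,j}^{(\ell)}\mT_{(i,j),(p,q)}^{(\ell)}$, which matches the reorganized left-hand side.

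The only delicate point I foresee is the cancellation against $\sum_{i,j}\mT_{(i,j),(p,q)}^{(\ell)}$ in the denominator: if that sum vanishes for some $(\ell,p,q)$ the formula for $\mV_{p,q}^{(\ell)}$ is ill-defined. I would handle this by observing that when $\sum_{i,j}\mT_{(i,j),(p,q)}^{(\ell)}=0$ the corresponding term on the right-hand side contributes zero regardless of how $\mV_{p,q}^{(\ell)}$ is set, so we may define $\mV_{p,q}^{(\ell)}$ arbitrarily in that case (e.g.\ zero) without affecting the identity. This degenerate case is the only obstacle to the otherwise direct algebraic verification.
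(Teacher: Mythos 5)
Your proof follows essentially the same route as the paper's: expand both inner products coordinate-wise, substitute the derivative formulas from \autoref{lemma_derivative}, reindex, and observe that the stated choice of $\mV_{p,q}^{(\ell)}$ makes the two sums coincide term by term. The only point of divergence is your remark on the degenerate case $\sum_{i,j}\mT_{(i,j),(p,q)}^{(\ell)}=0$, which the paper silently ignores; be aware, however, that your fix of setting $\mV_{p,q}^{(\ell)}$ arbitrarily only works when the numerator $\sum_{i,j}\mU_{i,j}^{(\ell)}\cdot\mT_{(i,j),(p,q)}^{(\ell)}$ also vanishes there --- otherwise no choice of $\mV_{p,q}^{(\ell)}$ rescues the identity, so the theorem implicitly assumes this situation does not occur.
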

\begin{proof}
According to \autoref{lemma_derivative},
\begin{equation}
\label{eq_u_derivative}
\begin{aligned}
    \vu^{\top}\nabla_{\vu}L(\vw,\vx, \bm{1})&=\sum_{\ell}\sum_{i,j}\mU_{i,j}^{(\ell)}\cdot \frac{\partial{L(\vw,\vx, \bm{1})}}{\partial{\mU_{i,j}^{(\ell)}}}\\
    &=\sum_{\ell}\sum_{i,j}\mU_{i,j}^{(\ell)}\cdot \sum_{p,q} \mT_{(i,j),(p,q)}^{(\ell)}\\
    &=\sum_{\ell}\sum_{i,j}\sum_{p,q}\mU_{i,j}^{(\ell)}\cdot \mT_{(i,j),(p,q)}^{(\ell)}\\
    &=\sum_{\ell}\sum_{p,q}\sum_{i,j}\mU_{i,j}^{(\ell)}\cdot \mT_{(i,j),(p,q)}^{(\ell)}\\
    &=\sum_{\ell}\sum_{p,q}\frac{\sum_{i,j}\mU_{i,j}^{(\ell)}\cdot \mT_{(i,j),(p,q)}^{(\ell)}}{\sum_{i,j}\mT_{(i,j),(p,q)}^{(\ell)}}\cdot {\sum_{i,j}\mT_{(i,j),(p,q)}^{(\ell)}}\\
    &=\sum_{\ell}\sum_{p,q}\frac{\sum_{i,j}\mU_{i,j}^{(\ell)}\cdot \mT_{(i,j),(p,q)}^{(\ell)}}{\sum_{i,j}\mT_{(i,j),(p,q)}^{(\ell)}}\cdot  \frac{\partial{L(\vw \odot \bm{1},\vx)}}{\partial{\mV_{p,q}}^{(\ell)}}. \\
%     &=\E\left[\sum_{\ell}\sum_{i,j}\sum_{p,q}\mU_{i,j}^{(\ell)}\cdot \mT_{(i,j),(p,q)}\right]\\
\end{aligned}
\end{equation}
By taking $\mV_{p,q}^{(\ell)}=\frac{\sum_{i,j}\mU_{i,j}^{(\ell)}\cdot \mT_{(i,j),(p,q)}^{(\ell)}}{\sum_{i,j}\mT_{(i,j),(p,q)}^{(\ell)}}$, 
\begin{equation}
\label{eq_v_derivative}
\begin{aligned}
    \vu^{\top}\nabla_{\vu}L(\vw,\vx, \bm{1})&=\sum_{\ell}\sum_{p,q}\mV_{p,q}^{(\ell)}\cdot \frac{\partial{L(\vw \odot \bm{1},\vx)}}{\partial{\mV_{p,q}}^{(\ell)}}\\
    &=\vv^{\top}\nabla_{\vv}L(\vw \odot \bm{1},\vx).
\end{aligned}
\end{equation}
% With $\mV_{p,q}^{(\ell)}=\frac{\sum_{i,j}\mU_{i,j}^{(\ell)}\cdot \mT_{(i,j),(p,q)}}{\sum_{i,j}\mT_{(i,j),(p,q)}}$, we have

% $\sum_{p,q}\mV_{p,q}^{(\ell)}\cdot \sum_{i,j} \mT_{(i,j),(p,q)}=\sum_{p,q}\sum_{i,j}\mU_{i,j}^{(\ell)}\cdot \mT_{(i,j),(p,q)}$,

Thus \autoref{theorem2} is affirmed.
\end{proof}

We now prove \autoref{lemma_transform} equipped with Taylor Expansion technique and \autoref{theorem2}.% and operate \autoref{theorem2} on the quantized weight. 
\begin{proof}
First, by adopting Taylor Expansions at $\vu=\bm{0}$, we can get that: %We operate the above theorem on the quantized weight, which is our optimization object. 
\begin{equation}
    L(\hat{\vw},\vx,\bm{1}+\vu)-L(\vw,\vx,\bm{1})
    \approx L(\hat{\vw},\vx,\bm{1}) + \vu^{\top}\nabla_{\vu} L(\hat{\vw},\vx,\bm{1})-L(\vw,\vx,\bm{1}).
\end{equation}

Then, according to \autoref{theorem2}, the above equation can be rewritten as:
\begin{equation}
    \label{eq:rewritten}
    L(\hat{\vw},\vx,\bm{1}+\vu)-L(\vw,\vx,\bm{1}) \approx L(\hat{\vw}\odot \bm{1},\vx)+ \vv^{\top}\nabla_{\vv} L(\hat{\vw}\odot \bm{1},\vx)-L(\vw,\vx,\bm{1}).
\end{equation}

Again, by adopting Taylor Expansions at $\vv=\bm{0}$ for the right part of \autoref{eq:rewritten}, we arrive at:
\begin{equation}
     \label{eq:transform}
    L(\hat{\vw},\vx,\bm{1}+\vu)-L(\vw,\vx,\bm{1})
    \approx L(\hat{\vw}\odot (\bm{1}+\vv),\vx)-L(\vw,\vx, \bm{1}).\\
\end{equation}

Finally, apply expectation on \autoref{eq:transform}, and \autoref{lemma_transform} is proved:
\begin{equation}
\begin{aligned}
    \E_{\vx\sim\mathcal{D}_c}&[L(\hat{\vw}, \vx, \bm{1}+\vu({\vx})) - L(\vw, \vx, \bm{1})] \\
    &\approx \E_{\vx\sim\mathcal{D}_c}[L(\hat{\vw}\odot (\bm{1}+\vv(\vx)),\vx) - L(\vw, \vx, \bm{1})]. \\
\end{aligned}
\end{equation}

\end{proof}

% \begin{equation}
% \begin{aligned}
%     L(\hat{\vw},\vx,\bm{1}+\vu)-L(\vw,\vx,\bm{1})
%     &\approx L(\hat{\vw},\vx,\bm{1}) + \vu^{\top}\nabla_{\vu} L(\hat{\vw},\vx,\bm{1})-L(\vw,\vx,\bm{1}) \\
%     &=  L(\hat{\vw}\odot \bm{1},\vx)+ \vv^{\top}\nabla_{\vv} L(\hat{\vw}\odot \bm{1},\vx)-L(\vw,\vx)\\
%     &\approx L(\hat{\vw}\odot (\bm{1}+\vv),\vx)-L(\vw,\vx)\\
% \end{aligned}
% \end{equation}
%We can see that noise on activation can be transformed into perturbation on weight when optimizing the model. Therefore, \autoref{lemma_transform} is proved by taking expectation of input data into consideration for the case of convolutional structure.

With the \autoref{lemma_transform} proved, we can easily derive \autoref{theorem1} by the following transformation:
\begin{equation}
\begin{aligned}
    \E_{\vx\sim\mathcal{D}_c}&[L(\hat{\vw}, \vx, \bm{1}+\vu({\vx})) - L(\vw, \vx, \bm{1})] \\
    &\approx \E_{\vx\sim\mathcal{D}_c}[L(\hat{\vw}\odot (\bm{1}+\vv(\vx)),\vx) - L(\vw, \vx, \bm{1})] \\
    &= \E_{\vx\sim\mathcal{D}_c}[L(\hat{\vw}\odot(\bm{1}+\vv({\vx})),\vx,\bm{1}) - L(\vw, \vx,\bm{1})+L(\hat{\vw},\vx,\bm{1})-L(\hat{\vw},\vx,\bm{1})]\\
    &= \E_{\vx\sim\mathcal{D}_c}[L(\hat{\vw},\vx,\bm{1})- L(\vw, \vx,\bm{1})+L(\hat{\vw}\odot(\bm{1}+\vv({\vx})),\vx,\bm{1})-L(\hat{\vw},\vx,\bm{1})].\\
\end{aligned}
\end{equation}
\section{Experiments }
\subsection{Supplementary experiments of \autoref{subsec:observation}}
\label{appendix_observation}
To explore the upper limit of PTQ, we concern two key parts of the algorithm (weight quantization and activation one). As is generally known, QAT is a popular way to produce favor quantization results, thus we use QAT's learning mechanism to replace each of the two parts in PTQ and record outcomes in \autoref{tab_ptq_qat}. In detail, about weight quantization, QAT's settings we used include the whole ImageNet, STE learning strategy and end-to-end training for 40 epochs while PTQ' settings only cover 1024 samples, training block by block. But we both keep the rounding-up-or-down parameter optimization space. About activation quantization, QAT' settings we used include the whole ImageNet, LSQ ~\citep{esser2019learned} scheme and end-to-end training for 5 epochs. 

However, results in \autoref{tab_ptq_qat} are surprising that although the optimization space in weight is kept very restricted, leveraging the whole data to implement weight tuning can achieve a large accuracy boost. And the activation quantization step size might not be as important as weight. These findings indicate that exploring a quantization-friendly weight may be the fresh insight for the accurate PTQ, where this work dedicates to.
\subsection{SUPPLEMENTARY EXPERIMENTS OF \autoref{subsec_qdrop}}
\textbf{Overfitting phenomenon.}
To analyze the overfitting problem refered in \autoref{subsec_qdrop}, we have conducted experiments to compare the accuracy on test data and calibration data, respectively. The table below is an example of ResNet-18 W2A2. It can be seen that with extremely low-bit quantization, both Case 2 and Case 3 perform well on calibration data but on test data Case 2 performs worse than Case 3. This is a shred of clear evidence that Case 2 suffers a more severe overfitting problem. 
\begin{minipage}[t]{\textwidth}
\begin{minipage}[t]{0.48\textwidth}
    \centering
    \begin{adjustbox}{max width=\linewidth}
        \begin{tabular}{ccccc}\\\toprule 
            \textbf{Activation setting} & \textbf{Weight setting} & \textbf{Accuracy} \\\midrule
            PTQ & PTQ & 46.64 \\  \midrule
            QAT & PTQ & 49.53 \\ 
            PTQ & QAT & \textbf{57.49} \\  
            \bottomrule
        \end{tabular} 
    \end{adjustbox}
    \captionof{table}{Ablation study of quantization settings on ResNet-18 W2A2.}
    \label{tab_ptq_qat}
\end{minipage}
\hfill
\begin{minipage}[t]{0.45\textwidth}
    \centering
    \begin{adjustbox}{max width=\linewidth}
        \begin{tabular}{ccccc}\\\toprule 
            \textbf{Methods} & \textbf{Test accuracy} & \textbf{Train accuracy} \\\midrule
            Case 1 & 18.88 & 19.82\\
            Case 2 & 45.77 & 69.54\\
            Case 3 & 48.07 & 69.92\\
            \textsc{QDrop} & 51.14 & 66.11\\
            % Case 1 & 31.26 & 34.67 \\ 
            % Case 2 & 50.86 & 70.12 \\ 
            % Case 3 & 52.83 & 70.61 \\  
            % \textsc{QDrop} & 54.72 & 66.50 \\
            \bottomrule
        \end{tabular} 
    \end{adjustbox}
    \captionof{table}{Train and test accuracy on ResNet-18 W2A2.}
    \label{tab_overfitting}
\end{minipage}
\end{minipage}

Rethinking \autoref{theorem1}, both Case 2 and Case 3 introduce the term (7-2) that implies flatness against the perturbation $\bm{\vv(x)}$. However, Case 2 completely introduces the quantization noise $\bm{\vu(x)}$ according to the calibration data. Thus the resulting flatness fits calibration data but does not generalize well on test data. Case 3 drops partial of $\bm{\vu(x)}$ and improves the possibility of flatness on test data instead. The two accuracy of \textsc{QDrop} also confirmed this phenomenon with more diverse directions of flatness. As for Case 1, its accuracy on both calibration data and test data is low. This is because that it does not introduce activation quantization during weight tuning and thus behaves worst without taking flatness term even on calibration data into account. 

% Recall that introducing activation quantization adds an extra term where the noise on weight $\vv(\vx)$ varies with input data. Combined with the experimental results, we argue that it might be easy to cause some overfitting phenomenon on tuning Term (7-2) in \autoref{theorem1} with only 1024 calibration samples.  As for Case 1, which doesn't take Term (7-2) into account, naturally behaves worst with low-bit activation quantization. 

% \textbf{Other choices for perturbation type.}
% Rethinking \autoref{subsec_qdrop}, there are numerous tricks to promote the flatness on the test set, such as using data augmentation for diverse $\vx$, modifying $\vv(\vx)$ directly even like ~\citep{foret2020sharpness}, which injects the worst-case weight perturbations or adjusting $\vu(\vx)$ to acquire $\vv(\vx)$ from another distribution. 

% In this part, we inspect other strategies and discover that they all cause some performance drop to some extent except the data augmentation trick. We hypothesis that it is made by increasing the objective loss described in \autoref{eq_ptq_loss}, where for
% asymmetric weight reconstruction, the increased weird loss in one block induced by employing these tricks will be amplified to subsequent ones and lead to a failure. However, our method, which tries to drop some quantization and keep it in FP32 signal, masks some reconstruction loss thereby has no such worries.
\begin{wraptable}{r}{7.5cm}
\vspace{-1em}
    \begin{adjustbox}{valign=t,max width=\linewidth}
    \begin{tabular}{lccc}\toprule 
         Method & $\lambda_1$ & $\lambda_5$ & $\Tr$ \\ \midrule
         Case 1 & 14770 & 6746 & 122894 \\ 
         Case 2 & 8423 & 4050 & 86287 \\  
         Case 3 & 8258 & 3821 & 84044 \\
         \textsc{QDrop} & 6850 & 3044 & 66371 \\ 
        \bottomrule
    \end{tabular}  
    \end{adjustbox}
    \caption{Hessian information of the ResNet-18 W3A3 model. $\lambda_1$ represents for the top-1 Hessian eigenvalue, $\lambda_5$ for top-5 Hessian eigenvalues and $\Tr$ for Hessian trace. }
    \label{tab_hessian}
\end{wraptable}
\textbf{Hessian information.}
As Hessian information is known to be a metric to characterize flatness property ~\citep{keskar2016large, dong2019hawq}, we also calculate the top-1, top-5 Hessian eigenvalues ($\lambda_1, \lambda_5$) and the Hessian trace ($\Tr$) among 3 cases and \textsc{QDrop} to better support our analysis. In table \autoref{tab_hessian}, \textsc{QDrop} has the smallest value of Hessian information, which matches with our theoretical framework and observations of loss landscape.
\subsection{Supplementary experiments of \autoref{sec_experiments}}
\label{Appendix_quantization_node}
To clarify the gap between AdaQuant and our method in classification task on 4-bit ResNet-18 and -50, we analyze the difference on settings between these two algorithms. There are two major discrepancies which would influence the final accuracy. One is the position of activation quantization nodes, the other is the FP32 accuracy of pretrain-models. As shown in \autoref{fig_quantization_node}, by inserting the quantizer like the right side one, AdaQuant would introduce two different quantizers for the same input with learned quantization parameters such as step size. We find this can bring $\sim0.4\%$ upswings on ResNet-18 and -50 W4A4 and improve more on ultra low-bit with experiments of our algorithm. However, this way to insert quantization nodes is not practical in real deployment, because two different quantizers for the same input would make it impossible to follow the so-called Requantize procedure ~\citep{li2021mqbench}. As for the pretrain-models, they use 71.97\% and 77.2\%, higher than ours (71.06\% and 77.0\%), which can indeed improve the accuracy. Replacing our settings with the two adopted in AdaQuant, we finally reach 71.07\% and 76.67\%.
\begin{minipage}[b]{\textwidth}
\begin{minipage}[b]{0.49\textwidth}
    \centering
    \vspace{0pt}
    \includegraphics[width=\textwidth]{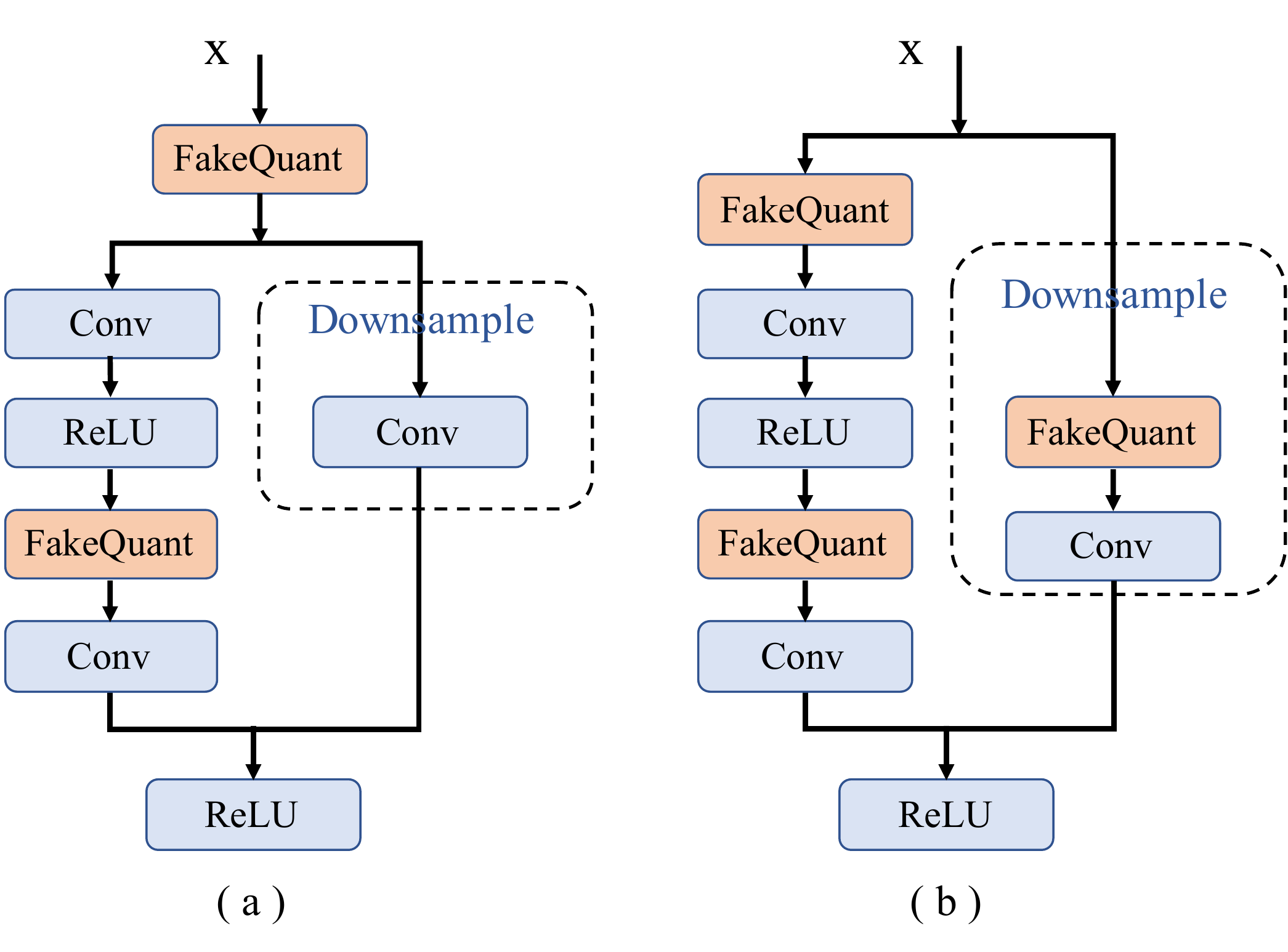}
    \captionof{figure}{Different ways of inserting activation quantization nodes. Our method obeys the left side one while AdaQuant adopts the right side one. }
    \label{fig_quantization_node}
    % \centering
    % \begin{adjustbox}{valign=t,max width=\linewidth}
    % \begin{tabular}{lc}\toprule 
    %     Methods & Accuracy \\ \midrule
    %     Baseline & 51.19 \\ \midrule
    %     Stochastic Round & 48.49  \\  \midrule
    %     SAM~\citep{foret2020sharpness} & 50.05\\
    %     \midrule
    %     Data Augmentation & \textbf{51.87} \\
    %     \midrule
    %     \textsc{QDrop}~(Ours) & \textbf{54.72}\\
    %     \bottomrule
    % \end{tabular}  
    % \end{adjustbox}
    % \captionof{table}{Impact of different design choices for \autoref{subsec_qdrop}, with W2A2 on ImageNet and ResNet-18}
    % \label{tab_perturbation_design}
  \end{minipage}
  \hfill
\begin{minipage}[b]{0.49\textwidth}
    \centering
    \begin{adjustbox}{max width=\textwidth}
    \begin{tabular}{lrrrrrrrr}
    \toprule
    \textbf{Method} & \textbf{Bits (W/A)} & \textbf{ResNet-18}  & \textbf{SWA$_{20}$} \\
    \midrule
    & FP32 &  71.06 & \textbf{71.50}\\
    \midrule
    \multirow{3}{*}{Min-Max*} & 8/8 & 70.94 & \textbf{71.40}\\
    \cmidrule(l{2pt}r{2pt}){2-4} 
    & 4/8 & 52.33 & \textbf{65.26}\\
    \cmidrule(l{2pt}r{2pt}){2-4} 
    & 4/4 & 26.05 & \textbf{44.37} \\
    \midrule
    \multirow{2}{*}{OMSE*} & 32/4 & 64.15  & \textbf{65.99}\\
    \cmidrule(l{2pt}r{2pt}){2-4} 
    ~\citep{choukroun2019low} & 4/4 & 38.32 & \textbf{55.86}\\
    \midrule
    \multirow{2}{*}{\textsc{Brecq}\dag*} & 2/4 & 65.35 & \textbf{66.33}\\
    \cmidrule(l{2pt}r{2pt}){2-4} 
    ~\citep{li2021brecq} & 2/2 & 41.74 & \textbf{44.30}\\
    \bottomrule
    \end{tabular}
    \end{adjustbox}
    \captionof{table}{Experiments between ResNet-18 and SWA$_{20}$ when adopting different PTQ methods. SWA$_{20}$ is acquired by finetuning ResNet-18 using SWA technique for 20 epochs.}
    \label{tab_fp32}
    \end{minipage}
%   \begin{minipage}[b]{0.4\textwidth}
%     \centering
%     \begin{adjustbox}{valign=t,max width=\linewidth}
%     \begin{tabular}{lccc}\toprule 
%          Method & $\lambda_1$ & $\lambda_5$ & $\Tr$ \\ \midrule
%          Case 1 & 14770 & 6746 & 122894 \\ 
%          Case 2 & 8423 & 4050 & 86287 \\  
%          Case 3 & 8258 & 3821 & 84044 \\
%          \textsc{QDrop} & 6850 & 3044 & 66371 \\ 
%         % \midrule
%         % \multirow{4}{*}{RegNet600 W3A3} & Case 1 & 1284 & 520 & 49682 \\ 
%         %  & Case 2 & 938 & 347 & 40053 \\  
%         %  & Case 3 & 481 & 342 & 30980 \\
%         %  & \textsc{QDrop} & 415 & 251 & 14652 \\ 
%         \bottomrule
%     \end{tabular}  
%     \end{adjustbox}
%     \captionof{table}{Hessian information of the ResNet-18 W3A3 quantized model. $\lambda_1$ represents for the top-1 Hessian eigenvalue, $\lambda_5$ for top-5 Hessian eigenvalues and $\Tr$ for Hessian trace. }
%     \label{tab_hessian}
% \end{minipage}
\end{minipage}
\subsection{Flatness and Post-Training Quantization}
\label{subsection:appendix_flatness}
While there are plenty of works exploring the correlation between flatness and generalization, the interaction between quantization and flatness has not been exploited much. In this paper, we first connect flatter quantized weight with activation quantization from PTQ view, as implied in \autoref{subsection:analsis}.  From this, we conjecture that flatness and quantization might be helpful to each other. 

By leveraging some mechanisms devoting to produce a smoother loss surface for better generalization, such as ~\citep{izmailov2018averaging}, the improved FP32 model is obtained and used to validate the performance on quantization compared with the naive one. In \autoref{tab_fp32}, model SWA$_{20}$ is enabled by applying the SWA technique~\citep{izmailov2018averaging} to ResNet-18 with 20 epochs fintuning process on the whole ImageNet. From the table, the observation is that the promotion on generalization can not fully represent the enhancement induced by SWA on quantization. With even lower bits thus larger noise, SWA$_{20}$ surpasses the naive one by a large margin. It also reveals that distinct FP32 models with analogous accuracy might contribute to surprisingly disparate outcomes after PTQ, particularly for those naive methods without any weight tuning.

In turn, there have been some studies that delve into robustness boost by applying quantization. ~\citep{fu2021double} advocates that quantization can be properly leveraged to enhance DNNs' robustness, even beyond their full-precision counterparts. They propose a random bit training strategy to accomplish it, where our work illustrates the correlation with bit and perturbation in \autoref{appendix_noise}.

% \textbf{Visualization of \textsc{QDrop}}
% To better figure the relationship between our method and \autoref{corollary:flatness} out, we describe it from visualization perspective. As perturbation on weight corresponds to noise on activation during optimizing the objective. And they both correlates with input data. Introducing activation quantization totally can be flat for calibration data but the flatness directions will be restricted to calibration data. \textsc{QDrop} randomly drops some activation quantization each forward pass and can acquire various activation noise thus weight noise. As the right part in \autoref{fig_u_v_relation} suggests, \textsc{QDrop} is flatter under more directions of perturbation.
% \begin{figure}[t]
%     \centering
%     \vspace{0pt}
%     \caption{Visualization of relationship between our method and \autoref{corollary:flatness}. Yellow color grid stands for FP32 value and green color for the quantized one on feature map. } 
%     \includegraphics[width=0.7\textwidth]{iclr2022/figs/u_v_relation.pdf}
%     \label{fig_u_v_relation}
% \end{figure}
\section{Related works}
\textbf{Post-training quantization.} Unlike QAT~\cite{esser2019learned, li2019additive, shen2021once} where the quantized model is finetuned with full training dataset and over 100 epochs training, PTQ is much more faster. Rounding-to-nearest operation is known to be the direct and easy way for quantizing parameters or activations in PTQ. Although there is almost no accuracy drop when quantizing to 8-bit, lower bit quantization is yet a hard task and worth exploring. ~\citep{choukroun2019low} transforms quantization to a Minimum Mean Squared Error problem both for weights and activations. ~\citep{nagel2019data} equalizes weight ranges among channels thus be more favorable to per-layer quantization and employs bias correction to absorb the output error induced by quantization. However, such methods neglect the task loss thus lead to a sub-optimal. AdaRound~\citep{nagel2020adaround}, which proposes to learn the rounding mechanism by reconstructing output layer by layer brings more opportunities for 4-bit quantization. Besides layer reconstruction, \textsc{Brecq}~\citep{li2021brecq} discusses more choices and advises to do block reconstruction with better accuracy at 2-bit weight quantization. Nonetheless, we argue that AdaRound and \textsc{Brecq} isolate weight quantization and activation one theoretically and experimentally, which might be a key point of failures on extremely low-bit quantization.
Recently, there is another trend of utilizing synthetic data for PTQ, which explicitly do backpropagation on the learned input tensor~\cite{cai2020zeroq, zhang2021diversifying, Li_2022_MixMix}.

\textbf{Flatness.} The idea of “flat” minima might date back to ~\citep{hochreiter1997flat}, where the benefits are recognized in recent years, such as generalization ~\citep{jiang2019fantastic, keskar2016large} and adversarial training~\citep{wu2020adversarial,zheng2021regularizing}. Some previous works ~\citep{izmailov2018averaging,foret2020sharpness} devote to improve the flatness of the trained weight for robustness under perturbation or distribution shift. Other ideas try to model flatness or sharpness formally by visualization of loss landscape or mathematical formulas. And for quantization, which could be viewed as some kind of noise, a flat model has been implied to be preferable, ~\citep{dong2019hawq,yang2019swalp,kadambi2020comparing}. Despite the natural fact that flatness helps with weight quantization, how does activation quantization involves with smoother loss surface has not been discussed deeply, particularly for post-training quantization. In this work, we introduce noise scheme by randomly dropping  activation quantization and achieve a general flatness. Another paper~\citep{fan2020training} also utilizes randomness by adding noise to weight for the simulation of weight quantization. But they target at reducing the induced bias of Straight Through Estimation~(STE) in QAT and also have different motivations and solve different problems from us.
\section{Implementation details}
\label{sec_appendix_implementation}
\textbf{Observation.} Here, we give the concrete implementation of experiments in \autoref{subsec:observation}. We actually consider three ways of introducing activation for Case 2, but we find the differences of outcomes among them are negligible thus employing one of them for clearer clarification.
\begin{algorithm}[t]
    \caption{Implementations of three cases in \autoref{subsec:observation}}
    \label{alg_observation}
    \KwIn{ Model with $K$ blocks.} 
    \uIf {Case 2} {
        \For{$k=1$ to $K$}{
            parameterize activation step size in this block \;
	   }
    }
    \For{$k=1$ to $K$}{
        Tuning weight like ~\citep{li2021brecq} by reconstructing block output \;
        \uIf {Case 3} {
            parameterize activation step size in this block \;
        }
	   }
	\uIf {Case 1} {
        \For{$k=1$ to $K$}{
            parameterize activation step size in this block \;
	   }
    }
    \textbf{return} Quantized model \;
\end{algorithm}

\textbf{ImageNet. }We randomly extract 1024 training examples from ImageNet as calibration dataset based on the standard pre-process. Pretrain-models are downloaded from \textsc{Brecq}’s open source code. Hyper-parameters we keep it as \textsc{Brecq}, such as batch size 32, learning rate for activation step size 4e-5, learning rate for weight tuning 1e-3, iterations 20000. Following \textsc{Brecq}, we first fold batch normalization layer into convolution then reconstruct output block-wise to learn the weight rounding policy and meanwhile use LSQ~\citep{esser2019learned} to parameterize activation step size. For \textsc{QDrop}, we learn the weight and activation parameters together and use 50\% rate to drop some activation quantization.

\textbf{MS COCO. }Here, we also obey \textsc{Brecq}’ settings and use the same pretrain-models with 256 training samples taken from MS COCO dataset for calibration. Parameters about resolution is set to 800 (max size 1333) and 600 (max size 1000) for ResNets and MobileNetV2, respectively and batch size is set to 2 while others are the same with classification task. To be noted, we didn’t quantize the head but applied block reconstruction to backbone and layer reconstruction to neck like \textsc{Brecq}.

\textbf{GLUE benchmark and SQuAD. }The BERT fine-tuned models are taken from huggingface group (https://huggingface.co/). And we sampled 1024 examples from training set. We keep the maximum sequence length to be 128 for GLUE benchmark but maximum sequence length 384 with doc stride 128 for SQuAD1.1. Also, we quantize all the part in BERT as well as the internal structure of the attention module with only affirming the embedding weight to 8-bit. Other settings and hyper-parameters are chosen in the same way with ImageNet experiments.

\textbf{Baselines. }We run baseline methods from open-source codes, such as AdaQuant, \textsc{Brecq} and Adaround. And we try our best to align some optimal settings like per-channel quantization for fair comparisons.
\end{document}